\definecolor{darkred}{RGB}{150,0,0}
\definecolor{darkgreen}{RGB}{0,150,0}
\definecolor{darkblue}{RGB}{0,0,200}
\newcommand{\beq}{\begin{equation}}
\newcommand{\eeq}{\end{equation}}
\newcommand{\x}{\vct{x}}
\definecolor{emmanuel}{RGB}{255,127,0}
\newcommand{\vct}[1]{\bm{#1}}
\newcommand{\mtx}[1]{\bm{#1}}
\newcommand{\X}{{\mtx{X}}}
\numberwithin{equation}{section} 
\newtheorem{remark}{Remark}
\newtheorem{theorem}{Theorem}
\newtheorem{lemma}{Lemma}
\theoremstyle{plain}
\icmltitlerunning{Adversarial Training with Generated Data in High-Dimensional Regression: An Asymptotic Study}
\begin{document}

\twocolumn[
\icmltitle{Adversarial Training with Generated Data in High-Dimensional Regression: An Asymptotic Study}



\icmlsetsymbol{equal}{*}

\begin{icmlauthorlist}
\icmlauthor{Yue Xing}{yyy}
\end{icmlauthorlist}

\icmlaffiliation{yyy}{Department of Statistics and Probability, Michigan State University, United States}

\icmlcorrespondingauthor{Yue Xing}{xingyue1@msu.edu}

\icmlkeywords{Machine Learning, ICML}

\vskip 0.3in
]



\printAffiliationsAndNotice{}  

\begin{abstract}
In recent years, studies such as \cite{carmon2019unlabeled,gowal2021improving,xing2022artificial} have demonstrated that incorporating additional real or generated data with pseudo-labels can enhance adversarial training through a two-stage training approach. In this paper, we perform a theoretical analysis of the asymptotic behavior of this method in high-dimensional linear regression. While a double-descent phenomenon can be observed in ridgeless training, with an appropriate $\mathcal{L}_2$ regularization, the two-stage adversarial training achieves a better performance. Finally, we derive a shortcut cross-validation formula specifically tailored for the two-stage training method.
\end{abstract}

\section{Introduction}\label{sec:intro}

The development of machine learning and deep learning methods has led to breakthrough performance in various applications. However, recent studies, e.g., \cite{goodfellow2014explaining}, observe that these models are vulnerable when the data are perturbed by adversaries. Attacked inputs can be imperceptibly different from clean inputs to humans but can cause the model to make incorrect predictions. 

To defend against adversarial attacks, adversarial training is a popular and promising way to improve the adversarial robustness of modern machine learning models. Adversarial training first generates attacked samples, then calculates the gradient of the model based on these augmented data. Such a procedure can make the model less susceptible to adversarial attacks in real-world situations.

There are fruitful results in the theoretical justification and methodology development in adversarial training. Among various research directions, one interesting aspect is to improve adversarial training with extra unlabeled data. Recent works successfully demonstrate great improvements in the adversarial robustness with additional unlabeled data. For example, \cite{xing2021adversarially}, show that additional external real data help improve adversarial robustness; \cite{gowal2021improving,wang2023better} use synthetic data to improve the adversarial robustness and achieve the highest 65\% to 70\% adversarial testing accuracy for CIFAR-10 dataset under AutoAttack (AA) in \cite{croce2020robustbench}\footnote{ \url{https://robustbench.github.io}  }.

A recent study \cite{xing2022artificial} reveals that adversarial training gains greater benefits from unlabeled data than clean (natural) training. The key observation is that adversarially robust models rely on the conditional distribution of the response given the features ($Y|X$) and the marginal distribution of the features ($X$). In contrast, clean training only depends on $Y|X$ in their study. As a result, adversarial training can benefit more than clean training from unlabeled data.

Besides adversarial training, high dimensional statistics is another important field of traditional machine learning to solve real-world problems from genomics, neuroscience to image processing.
While many studies focus on obtaining a better performance via regularization, one surprising phenomenon in this field is the double descent phenomenon \citep{belkin2019two,hastie2019surprises}, which refers to a U-shaped curve in the test error as a function of the model complexity, together with a second descent phase occurring in the over-parameterized regime. This phenomenon challenges the conventional wisdom that increasing model complexity always leads to over-fitting. It provides significant implications for designing and analyzing machine learning algorithms in high-dimensional settings. 

Given the substantial achievements in high-dimensional statistics, this paper aims to extend the analysis of \cite{xing2022artificial} to a high-dimensional regression setup, in which both the data dimension $d$ and the sample size of the labeled data $n_1$ increase and $d/n_1\rightarrow\gamma$ asymptotically. Although \cite{xing2022artificial} provides a theoretical explanation for the benefits of unlabeled data in the large sample regime ($n_1\gg d$), the asymptotic behavior of the two-stage method in other scenarios remains unclear.

Our contributions are summarized as follows:
\begin{itemize}
	\item We derived the asymptotic convergence of the two-stage adversarial training when $d/n_1\rightarrow\gamma$ for some constant $\gamma>0$. (Section \ref{sec:analysis}).
	\item It is observed that a proper ridge penalty in the clean training stage benefits the two-stage method. However, the optimal ridge penalty for the clean estimate in the first stage of \cite{xing2022artificial} differs from the one yielding the best clean performance. We conjecture that this discrepancy arises from the change in the error decomposition from clean training to two-stage adversarial training. To facilitate more efficient hyper-parameter tuning, we propose adaptations to existing cross validation (CV) methods, improving the time-consuming vanilla CV approach (Sections \ref{sec:weight_decay} and \ref{sec:cross_validation}).
\end{itemize}

\subsection{Related Works}

Below is a summary of related works in adversarial training, high-dimensional statistics, and cross validation.

\paragraph{Adversarial Training.}



There are many studies in the area of adversarial training. Some studies, e.g., \cite{goodfellow2014explaining,zhang2019theoretically,wang2019improving,cai2018curriculum,zhang2020attacks,carmon2019unlabeled,gowal2021improving}, work in methodology.
Theoretical investigations have also been conducted from different perspectives. For instance, \citet{chen2020more, javanmard2020precise,taheri2021statistical,yin2018rademacher,raghunathan2019adversarial,najafi2019robustness,min2020curious,hendrycks2019using,dan2020sharp,wu2020revisiting,deng2021improving} study the statistical properties of adversarial training; \citet{sinha2018certifying,wang2019convergence,xiao2022stability} study the optimization perspective; \citet{gao2019convergence,zhang2020over,zhang2023understanding,mianjy2022robustness,lvimplicit,xiao2021adversarial} work on  deep learning.

\paragraph{Double Descent and High-Dimensional Statistics.}

Double descent phenomenon is an observation in the learning curves of machine learning models. It describes the behavior of the generalization gap, i.e., the difference between the model performance on the training data and testing data. In a typical learning curve, the generalization error decreases and then increases with larger model complexity. However, in the double descent phenomenon, after the first decrease-increase pattern, the error decreases again when further enlarging the model complexity in the over-fitting regime. This non-monotonic behavior of the learning curve has been observed in various machine learning settings. Comprehensive investigations into the double descent phenomenon can be found in \cite{belkin2019two,hastie2019surprises,ba2020generalization,d2020double,adlam2020understanding,liu2021kernel,rocks2022memorizing}.

\paragraph{Cross Validation.}

Cross validation (CV) is a resampling procedure used to evaluate the performance of machine learning models. This paper mainly considers leave-one-out CV. For leave-one-out CV, it trains the model using all-but-one samples and repeats this process so that every sample is left in the estimation once. The final model performance is then averaged across all the models. The model can generalize better to new data by optimizing the hyperparameters in the model, e.g., regularization, through CV.

However, although a leave-one-out CV is an effective method for selecting hyperparameters, it is time-consuming by its design. Consequently, some studies propose shortcut formulas for the leave-one-out CV to reuse some terms when estimating the model using different data. Studies related to CV can be found in
\cite{stone1978cross,picard1984cross,shao1993linear,browne2000cross,berrar2019cross}.

\section{Model Setup}\label{sec:assumptions}

In this section, we present the data generation model and the two-stage adversarial training framework.

 \paragraph{Data generation model.}
We assume that the attributes $X\sim N(\textbf{0},\vct{\vct\Sigma})$ with covariance matrix $\vct\Sigma=\vct I_d$, and the response $Y$ satisfies $Y=X^{\top}{\vct\theta}_0+\varepsilon$ for $\|{\vct\theta}_0\|=r=O(1)$ and a Gaussian noise $\varepsilon$ with $Var(\varepsilon)=\sigma^2$. 

\paragraph{Two-stage adversarial training.} There are two stages in this training framework. In the first stage, we utilize $n_1$ i.i.d. labeled samples, i.e., $(\x_i,y_i)$ for $i=1,\ldots, n_1$. We consider the scenario where $d\asymp n_1$. The first stage solves the following clean training problem
\begin{eqnarray}
    \frac{1}{n_1}\sum_{i=1}^{n_1}  (\vct x_i^{\top }{\vct\theta}-y_i)^2+\lambda\|{\vct\theta}\|^2\label{eqn:clean}
\end{eqnarray}
and obtain the clean estimate $\widehat{{\vct\theta}}_0(\lambda)$. 

In the second stage, we use the trained model $\widehat{{\vct\theta}}_0(\lambda)$ to generate a pseudo response for a set of unlabeled data, i.e., 
\begin{eqnarray*}
    \widehat{y}_i=\vct x_i^{\top }\widehat{{\vct\theta}}_0(\lambda)+\varepsilon_i
\end{eqnarray*}
for $i=n_1+1,\ldots,n_1+n_2$. In this paper, we consider the scenario where $n_2=\infty$. We also assume $\sigma^2$ is known and $\varepsilon_i$ are generated from $N(0,\sigma^2)$. Finally we use the extra data with pseudo response to do adversarial training and minimize the following loss w.r.t $\vct\theta$:
\begin{eqnarray}
    \frac{1}{n_2}\sum_{i=n_1+1}^{n_1+n_2}  \sup_{\vct z\in\mathcal{B}_2(\x_i,\epsilon)} (\vct z^{\top}{\vct\theta}-
\widehat{y}_i)^2.\label{eqn:adv}
\end{eqnarray}
Denote the final solution as  $\vct{\widetilde\theta}_\epsilon(\lambda)$.

\begin{remark}
    The two-stage method in this paper is slightly different from the original one in \cite{gowal2021improving,xing2022artificial}. We only utilize the generated data in the second stage. This simplifies the theoretical analysis. In addition, when $d/n_1=\gamma$ is a large constant, we empirically observe that the two-stage method is better than an adversarial training with only labeled data, i.e., the right of Figure \ref{fig:compare}.
\end{remark}

\begin{remark}
Our initial trial indicates that adding additional regularization in equation (\ref{eqn:adv}) does not help much. Thus, we only inject a penalty in the clean training stage.
\end{remark}


\textbf{Expected Adversarial Risk} Under the model assumption of $(X,Y)$, the population adversarial risk for any given estimate $\vct\theta$ becomes
\begin{eqnarray*}
    R_\epsilon(\vct\theta,\vct\theta_0)
    &=&\|\vct\theta-\vct\theta_0\|_{\vct\Sigma}^2\\
    &&+2c_0\epsilon\|\vct\theta\|\sqrt{\|\vct\theta-\vct\theta_0\|_{\vct\Sigma}^2+\sigma^2}+\epsilon^2\|\vct\theta\|^2,
\end{eqnarray*}
where $\|\cdot\|$ is the $\mathcal{L}_2$ norm, and $c_0=\sqrt{2/\pi}$ is derived from the exact distribution of $(X,Y)$. We rewrite $R_\epsilon(\vct\theta,\vct\theta_0)$ as $R_\epsilon(\vct\theta)$ for simplicity when no confusion arises.

\begin{remark}
    One can denote $\vct\theta_\epsilon=\arg\min_{\vct\theta} R_\epsilon(\vct\theta,\vct\theta_0)$
as the best robust model. However, from $R_\epsilon(\vct\theta,\vct\theta_0)$, we are interested in $\|\vct\theta-\vct\theta_0\|_{\vct\Sigma}$ and $\|\vct\theta\|$ rather than $\|\vct\theta-\vct\theta_\epsilon\|$. 

    Based on \cite{xing2021adversarially}, when an estimate $\vct\theta\rightarrow\vct\theta_\epsilon$, the excess adversarial risk $R_\epsilon(\vct\theta,\vct\theta_0)-R_\epsilon(\vct\theta_\epsilon,\vct\theta_0)$ can be approximated by a function of $\vct\theta-\vct\theta_\epsilon$. However, when $\vct\theta-\vct\theta_\epsilon$ diverges in the high-dimensional setup, such an approximation leads to a large error.
\end{remark}

\section{Analyzing the Two-Stage Adversarial Training Framework}

This section presents the main theoretical results and simulation studies. We first demonstrate the main theory of the convergence of the two-stage method in Section \ref{sec:analysis}, take different $\lambda$ under different attack strength $\epsilon$ in Section \ref{sec:weight_decay}, and finally introduce a CV method in Section \ref{sec:cross_validation}.

\subsection{Convergence Result}\label{sec:analysis}

For the two-stage adversarial framework, to study $\vct{\widetilde\theta}_{\epsilon}(\lambda)$, we denote the following function
\begin{eqnarray*}
    m_{\gamma}(-\lambda) = \frac{-(1-\gamma+\lambda)+\sqrt{ (1-\gamma+\lambda)^2+4\lambda\gamma}}{2\gamma\lambda},
\end{eqnarray*}
which is used to describe the asymptotic behavior of $tr\left( (\sum_{i=1}^{n_1}\x_i\x_i^{\top}+\lambda \vct{I}_d)^{-1} \right)$ as in \cite{hastie2019surprises}.

After defining $m_{\gamma}$, one can obtain the convergence of $\widehat{\vct\theta}_0(\lambda)$, and further figure out the asymptotic behavior of $\widehat{\vct\theta}_\epsilon(\lambda)$. The convergence of the two-stage adversarial training framework is as follows:
\begin{theorem}[Convergence of Two-Stage Adversarial Training]\label{thm:adv_convergence}
    With probability tending to 1, $\widehat{\vct\theta}_0(\lambda)$ satisfies
	\begin{eqnarray*}
	\|\vct{\widehat{\theta}}_0(\lambda)-\vct\theta_0\|^2&\rightarrow&\lambda^2 r^2 m_{\gamma}'(-\lambda)\\&&+\sigma^2\gamma\left(m_{\gamma}(-\lambda)-\lambda m_{\gamma}'(-\lambda) \right),\\
	\|\vct{\widehat{\theta}}_0(\lambda)\|^2
 &\rightarrow&r^2[ 1-2\lambda m_{\gamma}(-\lambda)+\lambda^2m_{\gamma}'(-\lambda)  ]\\&&+\sigma^2\gamma[m_{\gamma}(-\lambda)-\lambda m_{\gamma}'(-\lambda)].
	\end{eqnarray*}
For the two-stage adversarial estimate $\vct{\widetilde\theta}_\epsilon(\lambda)$, assuming $n_2=\infty$, $\widetilde{\vct\theta}_\epsilon(\lambda)$ satisfies
\begin{eqnarray*}
    \|\widetilde{\vct\theta}_\epsilon(\lambda)-{\vct\theta}_0\|^2
    &\rightarrow&\frac{1}{(1+\alpha_\epsilon(\lambda))^2}\|\widehat{\vct\theta}_0(\lambda)\|^2\\
    &&+r^2-\frac{2}{(1+\alpha_\epsilon(\lambda))}\widehat{\vct\theta}_0(\lambda)^{\top}{\vct\theta}_0,\\
    \|{\vct{\widetilde\theta}}_\epsilon(\lambda)\|^2&\rightarrow& \frac{1}{(1+\alpha_\epsilon(\lambda))^2}\|\widehat{\vct\theta}_0(\lambda)\|^2,
\end{eqnarray*}
where $2\widehat{\vct\theta}_0(\lambda)^{\top}{\vct\theta}_0$ can be calculated via $$2\widehat{\vct\theta}_0(\lambda)^{\top}{\vct\theta}_0=\|\vct\theta_0\|^2+\|\widehat{\vct\theta}_0(\lambda)\|^2-\|\widehat{\vct\theta}_0(\lambda)-\vct\theta_0\|^2,$$ and $\alpha_\epsilon(\lambda)$ is the solution of $\alpha$ in 
\begin{eqnarray*}
    &&\alpha+\epsilon c_0 \frac{\alpha\|\widehat{\vct\theta}_0(\lambda)\|}{\sqrt{ \|\widehat{\vct\theta}_0(\lambda)\|^2\alpha^2+\sigma^2(1+\alpha)^2 }}\\
    &=&\epsilon c_0 \frac{\sqrt{ \|\widehat{\vct\theta}_0(\lambda)\|^2\alpha^2+\sigma^2(1+\alpha)^2 }}{\|\widehat{\vct\theta}_0(\lambda)\|}+\epsilon^2.
\end{eqnarray*}
\end{theorem}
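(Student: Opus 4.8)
The plan is to treat the two stages separately: the first-stage limits are a ridge-regression computation governed by the Marchenko--Pastur law, while the second-stage limits reduce, conditionally on $\widehat{\vct\theta}_0(\lambda)$, to a purely deterministic scalar optimization because $n_2=\infty$.

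For the clean estimate I would start from the normal equations for (\ref{eqn:clean}), writing $\widehat{\vct\theta}_0(\lambda)=(\widehat{\vct\Sigma}+\lambda\vct{I}_d)^{-1}\tfrac1{n_1}\X^\top\y$ with $\widehat{\vct\Sigma}=\tfrac1{n_1}\sum_i\x_i\x_i^\top$, and substituting $\y=\X\vct\theta_0+\vct\varepsilon$. This splits $\widehat{\vct\theta}_0-\vct\theta_0$ into a bias part $-\lambda(\widehat{\vct\Sigma}+\lambda\vct{I}_d)^{-1}\vct\theta_0$ and a variance part $(\widehat{\vct\Sigma}+\lambda\vct{I}_d)^{-1}\tfrac1{n_1}\X^\top\vct\varepsilon$. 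Taking the squared norm and the conditional expectation over $\vct\varepsilon$ produces the quadratic forms $\lambda^2\vct\theta_0^\top(\widehat{\vct\Sigma}+\lambda\vct{I}_d)^{-2}\vct\theta_0$ and $\tfrac{\sigma^2}{n_1}\operatorname{tr}\!\big(\widehat{\vct\Sigma}(\widehat{\vct\Sigma}+\lambda\vct{I}_d)^{-2}\big)$. Using $\vct\Sigma=\vct{I}_d$ and $\|\vct\theta_0\|=r$, the bias form concentrates to $\lambda^2 r^2\,\tfrac1d\operatorname{tr}((\widehat{\vct\Sigma}+\lambda\vct{I}_d)^{-2})$, and the resolvent identity $\widehat{\vct\Sigma}(\widehat{\vct\Sigma}+\lambda\vct{I}_d)^{-2}=(\widehat{\vct\Sigma}+\lambda\vct{I}_d)^{-1}-\lambda(\widehat{\vct\Sigma}+\lambda\vct{I}_d)^{-2}$ rewrites the variance form in terms of the same two normalized traces times $d/n_1\to\gamma$. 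I would then invoke the limits $\tfrac1d\operatorname{tr}((\widehat{\vct\Sigma}+\lambda\vct{I}_d)^{-1})\to m_\gamma(-\lambda)$ and $\tfrac1d\operatorname{tr}((\widehat{\vct\Sigma}+\lambda\vct{I}_d)^{-2})\to m_\gamma'(-\lambda)$ (the latter following by differentiating the resolvent-trace limit in $\lambda$, as in \cite{hastie2019surprises}), which immediately gives the stated limit for $\|\widehat{\vct\theta}_0(\lambda)-\vct\theta_0\|^2$. For $\|\widehat{\vct\theta}_0(\lambda)\|^2$ I would expand $\|\widehat{\vct\theta}_0\|^2=r^2+2\vct\theta_0^\top(\widehat{\vct\theta}_0-\vct\theta_0)+\|\widehat{\vct\theta}_0-\vct\theta_0\|^2$; the cross term's variance part vanishes in conditional mean and its bias part $-\lambda\vct\theta_0^\top(\widehat{\vct\Sigma}+\lambda\vct{I}_d)^{-1}\vct\theta_0\to-\lambda r^2 m_\gamma(-\lambda)$, recovering the second displayed formula.

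For the adversarial stage I would condition on $\widehat{\vct\theta}_0(\lambda)$ and note that, since the pseudo-labels obey $\widehat y=\x^\top\widehat{\vct\theta}_0(\lambda)+\varepsilon$ with $\varepsilon\sim N(0,\sigma^2)$ and $n_2=\infty$, the empirical objective (\ref{eqn:adv}) converges to the population adversarial risk obtained by substituting $\widehat{\vct\theta}_0(\lambda)$ for $\vct\theta_0$ in the displayed formula for $R_\epsilon$. Writing $\vct\theta=a\,\widehat{\vct\theta}_0(\lambda)+\vct{w}$ with $\vct{w}\perp\widehat{\vct\theta}_0(\lambda)$, both $\|\vct\theta\|^2$ and $\|\vct\theta-\widehat{\vct\theta}_0(\lambda)\|^2$ are strictly increasing in $\|\vct{w}\|^2$, and since each term of $R_\epsilon$ is nondecreasing in both $\|\vct\theta\|$ and $\|\vct\theta-\widehat{\vct\theta}_0(\lambda)\|$, the minimizer is collinear with $\widehat{\vct\theta}_0(\lambda)$; I parametrize it as $\widetilde{\vct\theta}_\epsilon(\lambda)=\tfrac1{1+\alpha}\widehat{\vct\theta}_0(\lambda)$. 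Substituting this form gives $\|\widetilde{\vct\theta}_\epsilon(\lambda)\|^2=\tfrac1{(1+\alpha)^2}\|\widehat{\vct\theta}_0(\lambda)\|^2$ and, expanding $\|\tfrac1{1+\alpha}\widehat{\vct\theta}_0-\vct\theta_0\|^2$, the claimed expression for $\|\widetilde{\vct\theta}_\epsilon(\lambda)-\vct\theta_0\|^2$, with the cross term handled by the stated identity $2\widehat{\vct\theta}_0^\top\vct\theta_0=\|\vct\theta_0\|^2+\|\widehat{\vct\theta}_0\|^2-\|\widehat{\vct\theta}_0-\vct\theta_0\|^2$. Finally, reducing $R_\epsilon$ to the scalar function of $a$, differentiating in $a$ (noting $\partial_a\sqrt{(a-1)^2\|\widehat{\vct\theta}_0\|^2+\sigma^2}=(a-1)\|\widehat{\vct\theta}_0\|^2/\sqrt{\cdots}$), setting the derivative to zero, and re-substituting $a=1/(1+\alpha)$ yields exactly the fixed-point equation for $\alpha_\epsilon(\lambda)$; I would also verify that this stationary point is the unique minimizer in the admissible range $a\in(0,1)$.

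The main obstacle is the rigorous justification of the first-stage limits: one must upgrade the Marchenko--Pastur convergence of the spectral distribution to (i) convergence of the normalized traces of the resolvent and its square, and (ii) concentration of the data-dependent quadratic forms $\vct\theta_0^\top(\widehat{\vct\Sigma}+\lambda\vct{I}_d)^{-k}\vct\theta_0$ around $\tfrac{r^2}{d}\operatorname{tr}((\widehat{\vct\Sigma}+\lambda\vct{I}_d)^{-k})$ and of the variance quadratic form around its conditional mean, uniformly as $d/n_1\to\gamma$. This is precisely where the deterministic-equivalent arguments of \cite{hastie2019surprises} are needed; by comparison, the second-stage analysis is deterministic once conditioned on $\widehat{\vct\theta}_0(\lambda)$, its only delicate point being the existence and uniqueness of the fixed point $\alpha_\epsilon(\lambda)$.
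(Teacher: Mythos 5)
Your proposal is correct, and the first-stage argument is essentially the paper's own: the same bias--variance split of $\widehat{\vct\theta}_0(\lambda)-\vct\theta_0$ into $-\lambda(\widehat{\vct\Sigma}+\lambda\vct{I}_d)^{-1}\vct\theta_0$ plus the noise term, the same reduction of all three quadratic forms to normalized resolvent traces via the identity $\widehat{\vct\Sigma}(\widehat{\vct\Sigma}+\lambda\vct{I}_d)^{-1}=\vct{I}_d-\lambda(\widehat{\vct\Sigma}+\lambda\vct{I}_d)^{-1}$, the same limits $\tfrac1d\operatorname{tr}((\widehat{\vct\Sigma}+\lambda\vct{I}_d)^{-1})\to m_\gamma(-\lambda)$, $\tfrac1d\operatorname{tr}((\widehat{\vct\Sigma}+\lambda\vct{I}_d)^{-2})\to m_\gamma'(-\lambda)$ imported from Hastie et al., and the same vanishing of the cross term (your route to $\|\widehat{\vct\theta}_0\|^2$ via $r^2+2\vct\theta_0^\top(\widehat{\vct\theta}_0-\vct\theta_0)+\|\widehat{\vct\theta}_0-\vct\theta_0\|^2$ is a trivially equivalent rearrangement of the paper's three-term expansion). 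You are also right, and appropriately explicit, that the delicate point is concentration of $\vct\theta_0^\top(\widehat{\vct\Sigma}+\lambda\vct{I}_d)^{-k}\vct\theta_0$ around $\tfrac{r^2}{d}\operatorname{tr}((\widehat{\vct\Sigma}+\lambda\vct{I}_d)^{-k})$; the paper glosses over this at the same level of rigor. Where you genuinely diverge is the second stage: the paper simply cites Javanmard et al.\ and Xing et al.\ for the fact that the minimizer of $R_\epsilon(\vct\theta,\widehat{\vct\theta}_0(\lambda))$ is $(\vct\Sigma+\alpha\vct{I}_d)^{-1}\vct\Sigma\widehat{\vct\theta}_0(\lambda)$ with $\alpha$ solving the general fixed-point equation, then specializes to $\vct\Sigma=\vct{I}_d$, whereas you derive the result from scratch: the orthogonal decomposition $\vct\theta=a\widehat{\vct\theta}_0+\vct w$ with monotonicity of every term of $R_\epsilon$ in $(\|\vct\theta\|,\|\vct\theta-\widehat{\vct\theta}_0\|)$ forces collinearity, and the scalar stationarity condition in $a=1/(1+\alpha)$ does reproduce exactly the stated equation (I checked the algebra: with $h=\|\widehat{\vct\theta}_0\|$, the derivative condition multiplied by $1+\alpha$ gives $\alpha+\epsilon c_0\alpha h/\sqrt{\alpha^2h^2+\sigma^2(1+\alpha)^2}=\epsilon c_0\sqrt{\alpha^2h^2+\sigma^2(1+\alpha)^2}/h+\epsilon^2$). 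Your route is self-contained and elementary but tied to $\vct\Sigma=\vct{I}_d$, where collinearity is available; the paper's citation-based route covers general $\vct\Sigma$ at the cost of importing the result. Neither you nor the paper actually establishes existence/uniqueness of the fixed point $\alpha_\epsilon(\lambda)$ (you at least flag it), so on that point you are no worse off than the original.
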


The proof of Theorem \ref{thm:adv_convergence} is in the appendix. We first study the convergence of $\vct{\widehat\theta}_0(\lambda)$, and then evaluate $\vct{\widetilde\theta}_\epsilon(\lambda)$.

From Theorem \ref{thm:adv_convergence}, similar to $\vct{\widehat\theta}_0$, one can see that $\|\widetilde{\vct\theta}_\epsilon(\lambda)-{\vct\theta}_0\|^2$ and $\|\widetilde{\vct\theta}_\epsilon(\lambda)\|^2$ converges to some value as a function of $(\gamma,\lambda,\epsilon,\sigma^2)$ asymptotically. 


We conduct a simulation to verify Theorem \ref{thm:adv_convergence} and study the risk of the two-stage adversarial training. In the experiment, we take $n_1=100$ and $n_2=\infty$, i.e., we directly use the population adversarial risk in the second stage. We change the data dimension $d$ to obtain different $\gamma=d/n_1$. The data follows $X\sim N(\textbf{0},\vct I_d)$, $Y=X^{\top}{\vct\theta}_0+\varepsilon$ with ${\vct\theta}_0\sim N(0,\vct I_d/d)$ and $\varepsilon\sim N(0,1)$. The adversarial attack is taken as $\epsilon=0.3$. We repeat the experiment 100 times to obtain the average performance. We use the excess adversarial risk, i.e., $R_\epsilon({\vct\theta})-R_\epsilon({\vct\theta}_\epsilon)$ for ${\vct\theta}\in\{\widehat{\vct\theta}_0(\lambda),\widehat{\vct\theta}_\epsilon(\lambda),\widetilde{\vct\theta}_\epsilon(\lambda)\}$, to evaluate the performance of the three methods. The model $\widehat{\vct\theta}_\epsilon(\lambda)$ refers to the vanilla adversarial training as an additional benchmark, i.e., we conduct adversarial training using the $n_1$ labeled samples. The simulation results are summarized in Figure \ref{fig:compare}, \ref{fig:compare_theory}, \ref{fig:ridge_vs_ridgeless}, \ref{fig:ridge_vs_ridgeless_theory}.

\begin{figure}[!ht]
    \centering
    \includegraphics[scale=0.45]{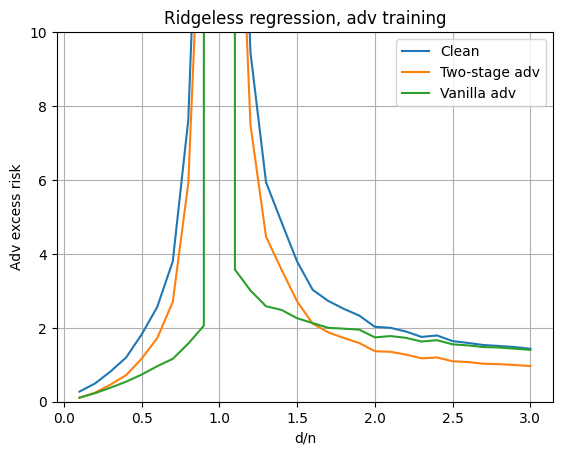}
    \caption{Simulation: Excess adversarial risk of clean training, vanilla adversarial training, and the two-stage adversarial training, without ridge penalty.}
    \label{fig:compare}
\end{figure}
\begin{figure}[!ht]
    \centering
    \includegraphics[scale=0.45]{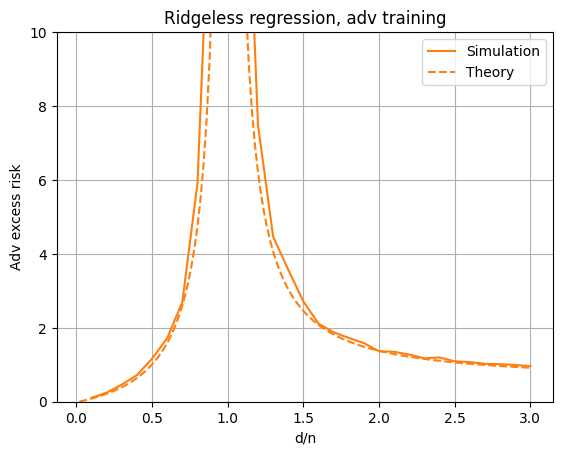}
    \caption{Theoretical value corresponding to Figure \ref{fig:compare}.}
    \label{fig:compare_theory}
\end{figure}

In Figure \ref{fig:compare}, we take $\lambda\rightarrow0$ to align with the experiments in the double descent literature.
There are several observations from Figure \ref{fig:compare}. First, if we compare the performance of the two-stage adversarial training and the clean training, the two-stage adversarial training is better than clean training. Second, when $d/n_1$ gets larger, the performance of the two-stage adversarial training is better than the vanilla adversarial training, indicating that the information of the additional extra data matters. Finally, for all the three training methods, they all observe a double-descent phenomenon.

In addition, we plot the theoretical curves for the excess adversarial risk associating with the two-stage adversarial training. From Figure \ref{fig:compare_theory}, the theoretical curve and the simulation result match with each other.

\begin{figure}[!ht]
    \centering
    \includegraphics[scale=0.45]{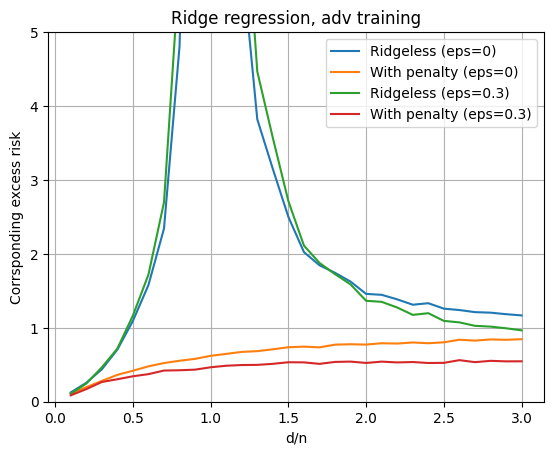}
    \caption{Simulation: Ridgeless regression and ridge regression with the best penalty in clean training and the two-stage adversarial training respectively. Adversarial training benefits more from a proper penalty.}
    \label{fig:ridge_vs_ridgeless}
\end{figure}

\begin{figure}[!ht]
    \centering
    \includegraphics[scale=0.45]{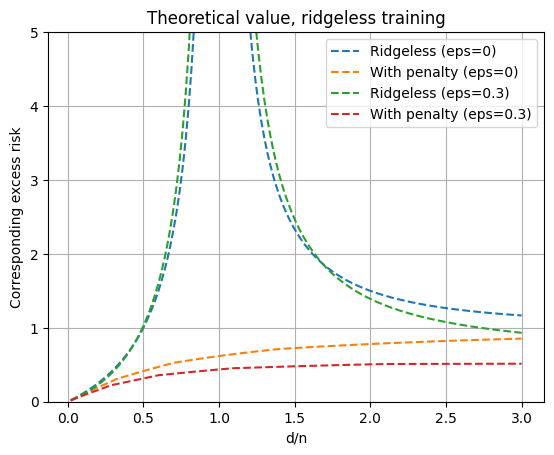}
    \caption{Theoretical value corresponding to Figure \ref{fig:ridge_vs_ridgeless}}
    \label{fig:ridge_vs_ridgeless_theory}
\end{figure}

Finally, we examine how the ridge penalty affects the performance. In the simulation in Figure \ref{fig:ridge_vs_ridgeless}, we take $\epsilon=0,0.3$ and compare the performance when $\lambda=0$ and $\lambda$ is taken to minimize the risk. In Figure \ref{fig:ridge_vs_ridgeless}, the y-axis is the corresponding excess adversarial risk, i.e., $\epsilon=0,0.3$ for the corresponding groups respectively. The corresponding theoretical curves can be found in Figure \ref{fig:ridge_vs_ridgeless_theory}.

From Figure \ref{fig:ridge_vs_ridgeless}, one can see that the excess risk for the ridgeless regression is similar, while the two-stage adversarial training ($\epsilon=0.3$) benefits more than clean training ($\epsilon=0$) when taking a proper ridge penalty, which motivates us to further investigate in the penlaty term in the following sections. In addition, the theoretical curves in Figure \ref{fig:ridge_vs_ridgeless_theory} align with the simulation results in \ref{fig:ridge_vs_ridgeless} as well.

\subsection{A Better Clean Estimate May Not Be Preferred}\label{sec:weight_decay}

Different from ridgeless regression in the large-sample regime, with high-dimensional data, it is essential to utilize ridge penalty or other regularization to improve the testing performance. While one can adjust the penalty to control the performance of the clean estimate, we would like to ask:

\begin{center}
    \textit{Is a better clean estimate (measured by clean testing performance) always preferred in the two-stage method?}
\end{center}

To answer the above question, it is essential to investigate the role of the clean estimate in the two-stage method. Recall that the population adversarial risk is written as
\begin{eqnarray*}
    R_\epsilon(\vct\theta,\vct\theta_0)
    &=&\|\vct\theta-\vct\theta_0\|_{\vct\Sigma}^2\\
    &&+2c_0\epsilon\|\vct\theta\|\sqrt{\|\vct\theta-\vct\theta_0\|_{\vct\Sigma}^2+\sigma^2}+\epsilon^2\|\vct\theta\|^2,
\end{eqnarray*}
where taking expectation on training data we have
\begin{eqnarray*}
    \mathbb{E}\|\vct{\widetilde\theta}_\epsilon(\lambda)-\vct\theta_0\|_{\vct\Sigma}^2=\|\mathbb{E}\vct{\widetilde\theta}_\epsilon(\lambda)-\vct\theta_0\|_{\vct\Sigma}^2 + tr(Var(\vct{\widetilde\theta}_\epsilon(\lambda))),
\end{eqnarray*}
and
\begin{eqnarray*}
    \mathbb{E}\|\vct{\widetilde\theta}_\epsilon(\lambda)\|^2=\|\mathbb{E}\vct{\widetilde\theta}_\epsilon(\lambda)\|^2+tr(Var(\vct{\widetilde\theta}_\epsilon(\lambda))).
\end{eqnarray*}
The above decompositions imply that the 
while ridge regression balances bias and variance of $\widehat{\vct\theta}_0(\lambda)$, the importance of bias and variance are changed in $\widetilde{\vct\theta}_\epsilon(\lambda)$. As a result, the optimal $\lambda$ for the clean estimate may not be the best when applied in the two-stage adversarial training.

To investigate how the optimal $\lambda$ changes in the two-stage method, a simulation study is conducted in Figure \ref{fig:ridge_regression}. We take $n_1=50$. The data $X\sim N(\textbf{0},\vct I_d)$ and $d=200$. The response $Y={\vct\theta}_0^{\top}X+\varepsilon$ with ${\vct\theta}_0=\textbf{1}/\sqrt{d}$ and $\varepsilon\sim N(0,0.1^2)$. Besides the $n_1$ labeled data, we take  $n_2=\infty$. We repeat 30 times to get the average result and check the best $\lambda$ under different attack strength $\epsilon$. 

\begin{figure}
    \centering
    \includegraphics[scale=0.45]{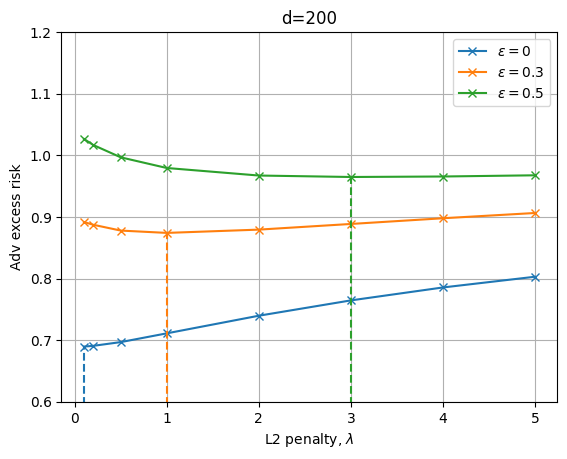}
    \caption{Simulation: How the tuning parameter $\lambda$ in clean ridge regression affects the final adversarial robustness when using extra unlabeled data in training. While a small $\lambda$ minimizes the population clean risk, this choice of $\lambda$ is sub-optimal when using $\widehat{{\vct\theta}}_0(\lambda)$ to create pseudo response. Besides the cases of $\epsilon\in\{0,0.3,0.5\}$, when $\epsilon=0.7$, the best penalty $\lambda$ is extremely large and is not included in the figure. }
    \label{fig:ridge_regression}
\end{figure}

From Figure \ref{fig:ridge_regression}, one can see that the optimal $\lambda$ gets larger when the attack strength gets larger. When $\epsilon=0$, the optimal $\lambda$ is closed to zero. When $\epsilon=0.3$, the best $\lambda$ is around 1, and 3 when $\epsilon=0.5$, both of which are much larger than the case for $\epsilon=0$. 

\subsection{Cross Validation}\label{sec:cross_validation} 
Observing that the optimal $\lambda$ for clean training is not the best for the two-stage adversarial training, we next investigate how to better select a proper $\lambda$.

While one can always use the leave-one-out procedure for any estimate, it is time-consuming. As a result, existing literature, e.g. \cite{hastie2019surprises}, utilize ways to approximate the leave-one-out CV procedure.

Recall that when $n_2=\infty$, the second stage of the two-stage method minimizes
\begin{eqnarray*}
    R_\epsilon({\vct\theta},\widehat{{\vct\theta}}_0(\lambda))&=&\|{\vct\theta}-\widehat{{\vct\theta}}_0(\lambda)\|^2_{\vct\Sigma}+\sigma^2+\epsilon^2\|{\vct\theta}\|^2\\
    &&+2c_0\epsilon \|{\vct\theta}\|\sqrt{\|{\vct\theta}-\widehat{{\vct\theta}}_0(\lambda)\|^2_{\vct\Sigma}+\sigma^2},
\end{eqnarray*}
and the solution is
\begin{eqnarray*}
    \widetilde{{\vct\theta}}_\epsilon(\lambda) = (\vct\Sigma+\alpha_\epsilon(\lambda) \vct{I}_d)^{-1}\vct\Sigma\widehat{\vct\theta}_0(\lambda),
\end{eqnarray*}
for some $\alpha_\epsilon(\lambda)\geq0$. One needs to rerun the CV procedure for $n_1$ times and obtain different $\vct{\widetilde\theta}_\epsilon(\lambda)^{-j}$, the leave-one-out estimate of $\vct{\widetilde\theta}_\epsilon(\lambda)$ leaving the $j$th labeled sample.

Given that the above formula $\vct{\widetilde\theta}_\epsilon(\lambda)$ is a transformation of $\widehat{\vct\theta}_0(\lambda)$, one can borrow the idea of approximating CV in clean training to the two-stage adversarial training. 
To be specific, since both the $\alpha_\epsilon(\lambda)$ and $\widehat{{\vct\theta}}_0(\lambda)$ relate to each labeled sample, assuming the $j$th sample is discarded, the estimate of the two-stage method will be
\begin{eqnarray}
    (\vct\Sigma+\alpha^{-j}\vct{I}_d)\vct\Sigma\widehat{\vct\theta}^{-j}_0(\lambda),\label{eqn:cv}
\end{eqnarray}
and we approximate both $\alpha^{-j}$ and $\widehat{\vct\theta}^{-j}_0(\lambda)$. 

The following lemma shows how to approximate $\alpha_\epsilon(\lambda)$ in the leave-one-out CV:
\begin{table*}[!ht]
    \centering
    \begin{tabular}{c|ccc}
        $\epsilon$ & 0.3 & 0.5 & 0.7\\\hline 
        Cross validation (CV loss in training) & 0.8750 & 0.9663 & 1.0300\\
        Cross validation (corresponding population risk) & 0.8871  &  0.9751 & 1.0270\\
        Cross validation for clean regression (corresponding population risk) & 0.8873  &  1.0076 & 1.1140\\\hline
        Best $\lambda$ (corresponding population risk) & 0.8741 & 0.9648 &  1.0185\\
    \end{tabular}
    \caption{Adversarial risks using cross validation and the best $\lambda$.}
    \label{tab:cross_validation}
\end{table*}

\begin{lemma}\label{thm:alpha}
Rewrite $\vct{\widetilde\theta}_\epsilon(\lambda)$ as $\vct{\widetilde\theta}$, $\vct{\widehat\theta}_0(\lambda)$ as $\vct{\widehat\theta}_0$, and $\alpha=\alpha_\epsilon(\lambda)$ for simplicity. Denote 
$\Delta_j=\widehat{\vct\theta}_0^{-j}-\widehat{\vct\theta}_0,$
and
\begin{eqnarray*}
    A_1&=&\frac{1}{\|\vct{\widetilde\theta}-\vct{\widehat\theta}_0\|_{\vct\Sigma}\|\vct{\widetilde\theta}\|}\vct{\widetilde\theta}^{\top}(\vct\Sigma+\alpha {\vct I}_d)^{-2}\vct\Sigma\vct{\widehat\theta}_0 
    \\&&- \frac{\|\vct{\widetilde\theta}\|}{\|\vct{\widetilde\theta}-\vct{\widehat\theta}_0\|_{\vct\Sigma}^3}(\vct{\widetilde\theta}-\vct{\widehat\theta}_0)^{\top}\vct\Sigma(\vct\Sigma+\alpha {\vct I}_d)^{-2}\vct\Sigma\vct{\widehat\theta}_0,\end{eqnarray*}
    \begin{eqnarray*}
    A_2&=&\frac{1}{\|\vct{\widetilde\theta}-\vct{\widehat\theta}_0\|_{\vct\Sigma}\|\vct{\widetilde\theta}\|} (\vct{\widetilde\theta}-\vct{\widehat\theta}_0)^{\top}\vct\Sigma(\vct\Sigma+\alpha {\vct I}_d)^{-2}\vct\Sigma\vct{\widehat\theta}_0 \\
    &&- \frac{\|\vct{\widetilde\theta}-\vct{\widehat\theta}_0\|_{\vct\Sigma}}{\|\vct{\widetilde\theta}\|^3}\vct{\widetilde\theta}^{\top}\vct\Sigma(\vct\Sigma+\alpha {\vct I}_d)^{-2}\vct{\widehat\theta}_0,\end{eqnarray*}
    \begin{eqnarray*}
    A_3&=&\left({I}_d+\epsilon c_0 \frac{\|\vct{\widetilde\theta}\|}{\|\vct{\widetilde\theta}-\theta_0\|_{\vct\Sigma}}\right)\Sigma(\vct\Sigma+\alpha {\vct I}_d)^{-1}\vct{\widetilde\theta}\\
    &&+\left(\epsilon c_0 \frac{\|\vct{\widetilde\theta}-\vct{\widehat\theta}_0\|_{\vct\Sigma}}{\|\vct{\widetilde\theta}\|}+\epsilon^2\right) (\vct\Sigma+\alpha {\vct I}_d)^{-1}\vct{\widetilde\theta},\end{eqnarray*}
    \begin{eqnarray*}
    A_4&=&\frac{1}{\|\vct{\widetilde\theta}-\vct{\widehat\theta}_0\|_{\vct\Sigma}\|\vct{\widetilde\theta}\|}\vct{\widetilde\theta}^{\top}(\vct\Sigma+\alpha {\vct I}_d)^{-1}\vct\Sigma  \\
    &&+\frac{\|\vct{\widetilde\theta}\|}{\|\vct{\widetilde\theta}-\vct{\widehat\theta}_0\|_{\vct\Sigma}^3}\alpha(\vct{\widetilde\theta}-\vct{\widehat\theta}_0)^{\top}\vct\Sigma(\vct\Sigma+\alpha {\vct I}_d)^{-1},\end{eqnarray*}
    \begin{eqnarray*}
    A_5&=&\frac{1}{\| \vct{\widetilde\theta}-\vct{\widehat\theta}_0\|_{\vct\Sigma}\|\vct{\widetilde\theta}\|} \alpha (\vct{\widetilde\theta}-\vct{\widehat\theta}_0)^{\top}\vct\Sigma(\vct\Sigma+\alpha {\vct I}_d)^{-1}\\
    &&-\frac{\|\vct{\widetilde\theta}-\vct{\widehat\theta}_0\|_{\vct\Sigma}}{\|\vct{\widetilde\theta}\|^3}\vct{\widetilde\theta}^{\top}(\vct\Sigma+\alpha {\vct I}_d)^{-1}\vct\Sigma,
\end{eqnarray*}
then when $\|\vct{\widehat\theta}_0-\vct{\widehat\theta}_0^{-j}\|=o(1)$, the leave-one-out estimate of $\alpha$ satisfies
\begin{eqnarray*}
    \alpha^{-j}-\alpha &=& \frac{\left( \epsilon c_0A_1\vct\Sigma(\vct{\widetilde\theta}-\vct{\widehat\theta}_0)+ \epsilon c_0 A_2\vct{\widetilde\theta}+ A_3\right)^{\top}}{\| \epsilon c_0A_1\vct\Sigma(\vct{\widetilde\theta}-\vct{\widehat\theta}_0)+ \epsilon c_0 A_2\vct{\widetilde\theta}+ A_3\|^2}\\
    &&\times\left(\epsilon c_0 A_4\Delta_j\vct\Sigma (\vct{\widetilde\theta}-\vct{\widehat\theta}_0)+\epsilon c_0 A_5\Delta_j\vct{\widetilde\theta}\right)+o,
\end{eqnarray*}
where $o$ represents negligible terms.
\end{lemma}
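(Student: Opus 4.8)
The plan is to characterize $\alpha_\epsilon(\lambda)$ as the solution of the first-order stationarity condition for the second-stage objective $R_\epsilon(\vct\theta,\widehat{\vct\theta}_0)$, and then read off the leave-one-out change in $\alpha$ by implicit differentiation, treating the clean-estimate perturbation $\widehat{\vct\theta}_0\mapsto\widehat{\vct\theta}_0^{-j}=\widehat{\vct\theta}_0+\Delta_j$ as small. Setting $\nabla_{\vct\theta}R_\epsilon(\vct\theta,\widehat{\vct\theta}_0)=\vct 0$ at $\vct\theta=\widetilde{\vct\theta}=(\vct\Sigma+\alpha\vct{I}_d)^{-1}\vct\Sigma\widehat{\vct\theta}_0$ and dividing by $2$, the minimizer satisfies the vector equation
\begin{eqnarray*}
\vct G(\alpha,\widehat{\vct\theta}_0):=c_1\,\vct\Sigma(\widetilde{\vct\theta}-\widehat{\vct\theta}_0)+c_2\,\widetilde{\vct\theta}=\vct 0,
\end{eqnarray*}
with $\tau:=\sqrt{\|\widetilde{\vct\theta}-\widehat{\vct\theta}_0\|_{\vct\Sigma}^2+\sigma^2}$, $c_1:=1+\epsilon c_0\|\widetilde{\vct\theta}\|/\tau$, and $c_2:=\epsilon^2+\epsilon c_0\tau/\|\widetilde{\vct\theta}\|$. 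Using the identity $(\vct\Sigma+\alpha\vct{I}_d)\widetilde{\vct\theta}=\vct\Sigma\widehat{\vct\theta}_0$, equivalently $\vct\Sigma(\widetilde{\vct\theta}-\widehat{\vct\theta}_0)=-\alpha\widetilde{\vct\theta}$, one finds $\vct G=(c_2-c_1\alpha)\widetilde{\vct\theta}$, so that $\vct G=\vct 0$ collapses to the scalar self-consistency equation $c_2=c_1\alpha$ of Theorem~\ref{thm:adv_convergence}. This identity is the structural fact I would exploit throughout.

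Next I would linearize the perturbed equation $\vct G(\alpha^{-j},\widehat{\vct\theta}_0^{-j})=\vct 0$ about $(\alpha,\widehat{\vct\theta}_0)$. Since $\|\Delta_j\|=o(1)$ by hypothesis, a first-order Taylor expansion gives
\begin{eqnarray*}
\frac{\partial\vct G}{\partial\alpha}\,(\alpha^{-j}-\alpha)+\frac{\partial\vct G}{\partial\widehat{\vct\theta}_0}\,\Delta_j=o(\|\Delta_j\|),
\end{eqnarray*}
an over-determined system of $d$ equations in the single scalar unknown $\alpha^{-j}-\alpha$. Because $\alpha^{-j}$ is a genuine root of the perturbed equation, the right-hand side lies (to first order) in the one-dimensional span of $\partial\vct G/\partial\alpha$; consequently the system is consistent and is solved exactly, to first order, by projecting onto $\partial\vct G/\partial\alpha$, i.e. by applying $(\partial\vct G/\partial\alpha)^{\top}/\|\partial\vct G/\partial\alpha\|^2$. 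This yields the outer pseudo-inverse structure of the claimed formula, and the $o$ in the statement absorbs the quadratic remainder.

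The remaining work is to evaluate the two Jacobians with the chain rule, using $\partial\widetilde{\vct\theta}/\partial\alpha=-(\vct\Sigma+\alpha\vct{I}_d)^{-1}\widetilde{\vct\theta}$ and $\partial\widetilde{\vct\theta}/\partial\widehat{\vct\theta}_0=(\vct\Sigma+\alpha\vct{I}_d)^{-1}\vct\Sigma$. Differentiating the scalar norm-ratios $\|\widetilde{\vct\theta}\|/\tau$ and $\tau/\|\widetilde{\vct\theta}\|$ inside $c_1,c_2$ with respect to $\alpha$ produces (up to sign) the quantities $A_1$ and $A_2$, while differentiating the vector factors $\vct\Sigma(\widetilde{\vct\theta}-\widehat{\vct\theta}_0)$ and $\widetilde{\vct\theta}$ with $c_1,c_2$ frozen produces $A_3=(c_1\vct\Sigma+c_2\vct{I}_d)(\vct\Sigma+\alpha\vct{I}_d)^{-1}\widetilde{\vct\theta}$; collecting these shows that $\partial\vct G/\partial\alpha$ equals, up to an overall sign, $\epsilon c_0A_1\vct\Sigma(\widetilde{\vct\theta}-\widehat{\vct\theta}_0)+\epsilon c_0A_2\widetilde{\vct\theta}+A_3$. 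For $\partial\vct G/\partial\widehat{\vct\theta}_0$ the key simplification is that the frozen-coefficient contribution collapses to $(c_2-c_1\alpha)(\vct\Sigma+\alpha\vct{I}_d)^{-1}\vct\Sigma\Delta_j$, which \emph{vanishes identically} by the self-consistency identity $c_2=c_1\alpha$; hence only the $\widehat{\vct\theta}_0$-derivatives of the norm-ratios survive, giving $\partial\vct G/\partial\widehat{\vct\theta}_0\,\Delta_j=\epsilon c_0A_4\Delta_j\,\vct\Sigma(\widetilde{\vct\theta}-\widehat{\vct\theta}_0)+\epsilon c_0A_5\Delta_j\,\widetilde{\vct\theta}$. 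Substituting both expressions into the projection formula of the previous step produces the stated identity.

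I expect the main obstacle to be the careful bookkeeping in these Jacobian computations: each coefficient $c_1,c_2$ depends on $\widehat{\vct\theta}_0$ both explicitly and through $\widetilde{\vct\theta}$, so the quotient rule for the $\vct\Sigma$-weighted norm-ratios must be tracked with care to match the precise forms of $A_1,\dots,A_5$. The conceptual crux, which is what keeps the final answer compact rather than an unwieldy sum of extra terms, is recognizing and using the cancellation $c_2-c_1\alpha=0$ at the optimum: together with $\vct\Sigma(\widetilde{\vct\theta}-\widehat{\vct\theta}_0)=-\alpha\widetilde{\vct\theta}$ it both annihilates the frozen-coefficient part of $\partial\vct G/\partial\widehat{\vct\theta}_0$ and makes both Jacobians parallel to $\widetilde{\vct\theta}$, which is precisely why the over-determined linear system is consistent and the pseudo-inverse returns the exact first-order coefficient rather than merely a least-squares surrogate.
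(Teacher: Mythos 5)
Your proposal is correct and follows essentially the same route as the paper's proof: both linearize the first-order stationarity condition of $R_\epsilon(\cdot,\widehat{\vct\theta}_0)$ jointly in $(\alpha^{-j}-\alpha,\Delta_j)$, both rely on the cancellation of the frozen-coefficient $\Delta_j$ term via the self-consistency identity (the term the paper marks as $=\mathbf{0}$, which in your notation is $(c_2-c_1\alpha)(\vct\Sigma+\alpha\vct I_d)^{-1}\vct\Sigma\Delta_j$), and both extract $\alpha^{-j}-\alpha$ from the resulting consistent over-determined system by projecting onto the $\alpha$-direction Jacobian. Your packaging of this as implicit differentiation of $\vct G(\alpha,\widehat{\vct\theta}_0)=(c_2-c_1\alpha)\widetilde{\vct\theta}$ is a cleaner (and more careful, e.g.\ retaining $\sigma^2$ inside the norm ratios) presentation of the same argument, not a different one.
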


The proof of Lemma \ref{thm:alpha} can be found in the appendix. Based on the result in Lemma \ref{thm:alpha}, we can use
    \begin{eqnarray*}
    \widehat\alpha^{-j}-\alpha &=&\frac{\left( \epsilon c_0A_1\vct\Sigma(\vct{\widetilde\theta}-\vct{\widehat\theta}_0)+ \epsilon c_0 A_2\vct{\widetilde\theta}+ A_3\right)^{\top}}{\| \epsilon c_0A_1\vct\Sigma(\vct{\widetilde\theta}-\vct{\widehat\theta}_0)+ \epsilon c_0 A_2\vct{\widetilde\theta}+ A_3\|^2}\\
    &&\times\left(\epsilon c_0 A_4\Delta_j\vct\Sigma (\vct{\widetilde\theta}-\vct{\widehat\theta}_0)+\epsilon c_0 A_5\Delta_j\vct{\widetilde\theta}\right)
\end{eqnarray*}
to approximate $\alpha^{-j}$.

In terms of the leave-on-out estimate of $\vct{\widehat\theta}_0(\lambda)$, i.e., $\vct{\widehat\theta}_0^{-j}(\lambda)$, one can use the  Kailath Variant fomular (from 3.1.2 of \citealp{petersen2008matrix}) and obtain
\begin{eqnarray*}
\vct{\widehat{\theta}}_0(\lambda)-\vct{\widehat{\theta}}_0^{-j}(\lambda)=\frac{y_j-\widehat{y}_j(\lambda)}{1-S_j(\lambda)}(\X^{\top}\X+n\lambda \vct I_d)^{-1}\x_j,
\end{eqnarray*}
where $\X\in\mathbb{R}^{n_1\times d}$ denotes the labeled data matrix, and $\widehat{y}_j(\lambda)=\vct{\widehat{\theta}}_0(\lambda)^{\top}\x_j$ as the fitted value of the $j$th observation.

After obtaining the estimate $\widehat\alpha^{-j}$ and $\vct{\widehat\theta}_0^{-j}(\lambda)$, one can put them into (\ref{eqn:cv}) to obtain the leave-one-out estimate of $\vct{\widetilde\theta}_\epsilon(\lambda)$. The following theorem justifies the correctness of the above procedure:
\begin{theorem}\label{thm:cv}
     Denote 
\begin{eqnarray*}
    CV(\lambda,\epsilon) = \frac{1}{n_1}\sum \left( |\vct x_i^{\top }\widetilde{\vct\theta}^{-j}_\epsilon(\lambda)-y_i|+\epsilon\|\widetilde{\vct\theta}^{-j}_\epsilon(\lambda)\| \right)^2,
\end{eqnarray*} and ${\vct{\breve\theta}}^{-j}_\epsilon(\lambda)=(\vct\Sigma+\widehat\alpha^{-j}\vct{I}_d)\Sigma\widehat{\vct\theta}^{-j}_0(\lambda)$ as the approximation of the leave-one-out estimate using Lemma \ref{thm:alpha}. Then under the Gaussian model assumption of $(X,Y)$, the approximated CV converges to the actual CV result, i.e.,
\begin{eqnarray*}
    \frac{1}{n_1}\sum \left( |\vct x_i^{\top }{\vct{\breve\theta}}^{-j}_\epsilon(\lambda)-y_i|+\epsilon\|{\vct{\breve\theta}}^{-j}_\epsilon(\lambda)\| \right)^2\xrightarrow{P}
    CV(\lambda,\epsilon).
\end{eqnarray*}
\end{theorem}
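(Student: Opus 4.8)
The plan is to reduce the statement to a uniform comparison between the exact leave-one-out two-stage estimate $\widetilde{\vct\theta}^{-j}_\epsilon(\lambda)$ and its approximation $\vct{\breve\theta}^{-j}_\epsilon(\lambda)$, and then to verify that their discrepancy is too small to affect the averaged CV loss in the limit $d/n_1\to\gamma$. The key structural fact is that both estimates are obtained by applying the \emph{same} transformation used to define $\widetilde{\vct\theta}_\epsilon(\lambda)$ in Theorem \ref{thm:adv_convergence}, namely $\vct\theta\mapsto(\vct\Sigma+\alpha\vct I_d)^{-1}\vct\Sigma\vct\theta$, to the leave-one-out clean estimate $\widehat{\vct\theta}^{-j}_0(\lambda)$; they differ only in the scalar used for $\alpha$ --- the exact $\alpha^{-j}$ (the solution of the fixed-point equation of Theorem \ref{thm:adv_convergence} with $\widehat{\vct\theta}_0$ replaced by $\widehat{\vct\theta}^{-j}_0$) versus the approximation $\widehat\alpha^{-j}$ supplied by Lemma \ref{thm:alpha}. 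Thus the entire error is driven by $\alpha^{-j}-\widehat\alpha^{-j}$, which Lemma \ref{thm:alpha} already characterizes as a higher-order remainder once $\|\Delta_j\|=\|\widehat{\vct\theta}^{-j}_0-\widehat{\vct\theta}_0\|=o(1)$.

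\textbf{First} I would establish the uniform smallness of $\Delta_j$. Starting from the Kailath/Sherman--Morrison identity $\widehat{\vct\theta}_0(\lambda)-\widehat{\vct\theta}^{-j}_0(\lambda)=\frac{y_j-\widehat y_j(\lambda)}{1-S_j(\lambda)}(\X^\top\X+n_1\lambda\vct I_d)^{-1}\vct x_j$, one bounds $\max_j\|\Delta_j\|$ by combining the spectral estimate $\|(\X^\top\X+n_1\lambda\vct I_d)^{-1}\|=O_P(1/n_1)$ (Marchenko--Pastur edge), $\max_j\|\vct x_j\|=O_P(\sqrt{n_1})$, concentration of the leverage scores $S_j(\lambda)$ toward a deterministic limit bounded away from $1$, and the subgaussian tails of the residuals $y_j-\widehat y_j(\lambda)$ (whose maximum over the $n_1$ indices costs only a $\sqrt{\log n_1}$ factor). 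This gives $\max_j\|\Delta_j\|=O_P(\sqrt{\log n_1/n_1})=o_P(1)$, verifying the hypothesis of Lemma \ref{thm:alpha} simultaneously for all $j$, and also $\max_j\|\widehat{\vct\theta}^{-j}_0\|=O_P(1)$.

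\textbf{Next}, since $\|\Delta_j\|=o(1)$, Lemma \ref{thm:alpha} yields $\alpha^{-j}-\widehat\alpha^{-j}=o$ uniformly, where the remainder is quadratic in $\Delta_j$, hence $\max_j|\alpha^{-j}-\widehat\alpha^{-j}|=O_P(\log n_1/n_1)$. Because $\vct\Sigma=\vct I_d$, the scalar map $\alpha\mapsto(\vct\Sigma+\alpha\vct I_d)^{-1}\vct\Sigma\,\widehat{\vct\theta}^{-j}_0$ has derivative bounded by $\|\widehat{\vct\theta}^{-j}_0\|=O_P(1)$, so $\max_j\|\widetilde{\vct\theta}^{-j}_\epsilon(\lambda)-\vct{\breve\theta}^{-j}_\epsilon(\lambda)\|=O_P(\log n_1/n_1)=o_P(1/\sqrt{n_1})$. \textbf{Finally}, writing $a_j=|\vct x_j^\top\widetilde{\vct\theta}^{-j}_\epsilon-y_j|+\epsilon\|\widetilde{\vct\theta}^{-j}_\epsilon\|$ and $b_j$ for its analogue with $\vct{\breve\theta}^{-j}_\epsilon$, the gap between the two CV sums is $\frac{1}{n_1}\sum_j(b_j^2-a_j^2)=\frac{1}{n_1}\sum_j(b_j-a_j)(b_j+a_j)$. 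One has $|b_j-a_j|\le(\|\vct x_j\|+\epsilon)\|\widetilde{\vct\theta}^{-j}_\epsilon-\vct{\breve\theta}^{-j}_\epsilon\|=O_P(\log n_1/\sqrt{n_1})$ uniformly, while $\max_j(a_j+b_j)=O_P(\sqrt{\log n_1})$ since $\widetilde{\vct\theta}^{-j}_\epsilon$ is independent of the held-out sample. The polynomial decay of the estimate gap dominates the polylogarithmic growth of the loss magnitudes, so each summand is $o_P(1)$ uniformly and the average tends to $0$ in probability, which is the claim.

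The hardest part is making every bound uniform in $j$ while simultaneously controlling the implicit map $\widehat{\vct\theta}_0\mapsto\alpha_\epsilon$. Because the fixed-point equation for $\alpha$ is nonlinear, justifying that $\alpha^{-j}$ is a differentiable function of $\widehat{\vct\theta}^{-j}_0$, and that the first-order expansion of Lemma \ref{thm:alpha} is legitimate with a genuinely quadratic remainder uniformly over $j$, requires the denominator $\|\epsilon c_0A_1\vct\Sigma(\widetilde{\vct\theta}-\widehat{\vct\theta}_0)+\epsilon c_0A_2\widetilde{\vct\theta}+A_3\|^2$ appearing in Lemma \ref{thm:alpha} to stay bounded away from zero after the perturbation $\widehat{\vct\theta}_0\mapsto\widehat{\vct\theta}^{-j}_0$. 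This amounts to strict convexity of $R_\epsilon(\cdot,\widehat{\vct\theta}_0)$ with a well-separated minimizer, which must be shown to persist uniformly under the leave-one-out perturbation. The remaining ingredients --- leverage-score concentration and subgaussian maximal inequalities --- are standard, but they are exactly what upgrades the pointwise estimates to the uniform ones that the averaging over $j$ demands.
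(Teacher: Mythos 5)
Your proposal follows essentially the same route as the paper's proof: establish, via the Kailath Variant / Sherman--Morrison identity together with leverage-score and residual concentration, that $\sup_j\|\Delta_j\|=O_P\bigl(\sqrt{\log n_1/n_1}\bigr)$, and then invoke Lemma \ref{thm:alpha} so that the error $\alpha^{-j}-\widehat\alpha^{-j}$ is a negligible higher-order term. Your final bookkeeping step --- converting the uniform bound on $\|\widetilde{\vct\theta}^{-j}_\epsilon(\lambda)-\vct{\breve\theta}^{-j}_\epsilon(\lambda)\|$ into convergence of the averaged CV losses, and flagging that the denominator in Lemma \ref{thm:alpha} must stay bounded away from zero --- is actually spelled out more explicitly than in the paper, whose proof stops after establishing $\sup_j\|\vct{\widehat\theta}_0(\lambda)-\vct{\widehat\theta}_0^{-j}(\lambda)\|^2=o\bigl((\log n_1)/n_1\bigr)$.
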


We use the simulation setting in Figure \ref{fig:ridge_regression} to examine the performance of the above cross validation method. The results are summarized in Table \ref{tab:cross_validation}. 

From Table \ref{tab:cross_validation}, there are two observations. First, one can see that using the cross validation, the CV loss in training is closed to the corresponding population risk. 

In addition, the performance of the proposed algorithm is closed to the optimal $\lambda$, and using clean regression in cross validation leads to a worse performance.

\section{Conclusion and Future Directions}

This paper studies the asymptotics of the two-stage adversarial training in a high-dimensional linear regression setup. Double descent is observed for the ridge-less regression case, and a better performance can be achieved via $\mathcal{L}_2$ regularization. We also derive the shortcut cross validation formula for this two-stage method to simplify the computation for cross validation.

The results in this paper can be extended in some directions. First, in literature, e.g., \cite{ba2020generalization}, the double descent phenomenon is also related to two-layer neural networks. An interesting future direction is to extend the analysis in this paper to the neural network setup. Second, since the shortcut formula for cross validation is distribution specific and assumes $n_2=\infty$, one may investigate in a more general cross validation procedure or relax to the scenario with a finite $n_2$. 






\newpage
\bibliographystyle{plainnat}
\bibliography{regression}

\newpage
\appendix
\onecolumn



\section{Proofs}\label{sec:proof:two}
\subsection{Theorem \ref{thm:adv_convergence}}
\begin{proof}[Proof of Theorem \ref{thm:adv_convergence}]

We first analyze $\|\vct{\widehat\theta}_0(\lambda)-\vct\theta_0\|^2$ and $\|\vct{\widehat\theta}_0(\lambda)\|^2$.

For $\|\vct{\widehat{\theta}}_0(\lambda)\|^2$, denoting $\vct y$ and $\vct\varepsilon$ as the vector of response and noise, we have
		\begin{eqnarray*}
	\|\vct{\widehat{\theta}}_0(\lambda)\|^2&=&\vct y^{\top}\X(\X^{\top}\X+\lambda n_1 \vct I_d)^{-2}\X^{\top}\vct y\\
 &=&\vct\theta_0^{\top}\X^{\top}\X(\X^{\top}\X+\lambda n_1 \vct I_d)^{-2}\X^{\top}\X \vct\theta_0 + \vct \varepsilon^{\top}\X(\X^{\top}\X+\lambda n_1 \vct I_d)^{-2}\X^{\top} \vct\varepsilon\\
 &&+2\vct\theta_0^{\top}\X^{\top}\X(\X^{\top}\X+\lambda n_1 \vct I_d)^{-2}\X^{\top}\vct\varepsilon.
	\end{eqnarray*}
	We look at the each term respectively. In probability, we have
	\begin{eqnarray*}
	&&\vct\theta_0^{\top}\X^{\top}\X(\X^{\top}\X+\lambda n_1 \vct I_d)^{-2}\X^{\top}\X \vct\theta_0 \\
 &=&r^2-2\lambda n_1\vct\theta_0^{\top}(\X^{\top}\X+\lambda n_1 \vct I_d)^{-1} \vct\theta_0+\lambda^2 n_1^2\vct\theta_0^{\top}(\X^{\top}\X+\lambda n_1 \vct I_d)^{-2} \vct\theta_0\\
	&\rightarrow& {r^2} \left[1-2\lambda m_{\gamma}(-\lambda)+\lambda^2 m_{\gamma}'(-\lambda)\right],
	\end{eqnarray*}
	and
 \begin{eqnarray*}
    \vct \varepsilon^{\top}\X(\X^{\top}\X+\lambda n_1 \vct I_d)^{-2}\X^{\top} \vct\varepsilon &\rightarrow& \sigma^2\left[\frac{1}{n_1}tr( (\widehat{\vct\Sigma}+\lambda \vct I_d)^{-1} )-\frac{1}{n_1}\lambda tr((\widehat{\vct\Sigma}+\lambda \vct I_d)^{-2} )\right]\\
     &\rightarrow& \sigma^2\left[ \gamma m_{\gamma}(-\lambda)-\lambda\gamma m_{\gamma}'(-\lambda)\right],
 \end{eqnarray*}
 where the function $m_\gamma$ is obtained from \cite{hastie2019surprises}.
 For the cross term, we also have
    \begin{eqnarray*}
        &&\left[\vct\theta_0^{\top}\X^{\top}\X(\X^{\top}\X+\lambda n_1 \vct I_d)^{-2}\X^{\top}\vct\varepsilon\right]^2\\
        &\rightarrow & \sigma^2 tr\left[ \X^{\top}\X(\X^{\top}\X+\lambda n_1 \vct I_d)^{-2}\X^{\top}\X(\X^{\top}\X+\lambda n_1 \vct I_d)^{-2} \X^{\top}\X\vct\theta_0\vct\theta_0^{\top}\right]\\
        &\xrightarrow{P}&0.
    \end{eqnarray*}
	As a result, 
	\begin{eqnarray*}
	\|\vct{\widehat{\theta}}_0(\lambda)\|^2\xrightarrow{P} r^2\left[ 1-2\lambda m_{\gamma}(-\lambda)+\lambda^2m_{\gamma}'(-\lambda) \right] +\sigma^2\gamma\left[ m_{\gamma}(-\lambda)-\lambda m_{\gamma}'(-\lambda) \right].
	\end{eqnarray*}

For $\|\vct{\widehat\theta}_0(\lambda)-\vct\theta_0\|^2$, we have
\begin{eqnarray*}
    \|\vct{\widehat\theta}_0(\lambda)-\vct\theta_0\|^2=\|\vct{\widehat{\theta}}_0(\lambda)\|^2+\|\vct{{\theta}}_0\|^2-2 \vct{\widehat\theta}_0(\lambda)^{\top}\vct{{\theta}}_0,
\end{eqnarray*}
where in probability,
\begin{eqnarray*}
    \left[\vct{\varepsilon}\X(\X^{\top}\X+\lambda n_1 \vct I_d)^{-1}\vct{{\theta}}_0\right]^2\rightarrow 0,
\end{eqnarray*}
and
\begin{eqnarray*}
    \vct{\widehat\theta}_0(\lambda)^{\top}\vct{{\theta}}_0&=&\vct{{\theta}}_0\X^{\top}\X(\X^{\top}\X+\lambda n_1 \vct I_d)^{-1}\vct{{\theta}}_0+\vct{\varepsilon}\X(\X^{\top}\X+\lambda n_1 \vct I_d)^{-1}\vct{{\theta}}_0\\
    &\rightarrow&r^2-\lambda n_1 \vct{{\theta}}_0(\X^{\top}\X+\lambda n_1 \vct I_d)^{-1}\vct{{\theta}}_0\\
    &\rightarrow& r^2 - \lambda r^2 m_{\gamma}(-\lambda).
\end{eqnarray*}
Consequently, in probability,
\begin{eqnarray*}
    \|\vct{\widehat\theta}_0(\lambda)-\vct\theta_0\|^2\rightarrow r^2\lambda^2 m'_{\gamma}(-\lambda)+\sigma^2\gamma[m_\gamma(-\lambda)-\lambda m_{\gamma}'(-\lambda)].
\end{eqnarray*}

For adversarial training, from \cite{javanmard2020precise,xing2021adversarially} we know that the minimizer of $R_\epsilon(\vct\theta, \vct{\widehat\theta}_0(\lambda))$ is
\begin{eqnarray*}
    \vct{\widetilde\theta}_\epsilon(\lambda)=(\vct\Sigma+\alpha\vct I_d)^{-1}{\vct\Sigma}\vct{\widehat\theta}_0(\lambda),
\end{eqnarray*}
where $c_0=\sqrt{2/\pi}$ and $\alpha$ satisfies
\begin{eqnarray*}
\alpha\left(1+\epsilon  c_0  \frac{\|\widetilde{{\vct\theta}}\|}{\sqrt{\|\widetilde{\vct\theta}-\widehat{\vct\theta}_0\|_{\vct\Sigma}^2+\sigma^2}}\right)= \left(\epsilon  c_0  \frac{\sqrt{\|\widetilde{\vct\theta}-\widehat{\vct\theta}_0\|_{\vct\Sigma}^2+\sigma^2}}{\|\widetilde{\vct\theta}\|}+\epsilon^2\right).
\end{eqnarray*}
When $\vct\Sigma=\vct I_d$, the above is reduced to
\begin{eqnarray*}
    \alpha+\epsilon c_0 \frac{\alpha\|\widehat{\vct\theta}_0\|}{\sqrt{ \|\widehat{\vct\theta}_0\|^2\alpha^2+\sigma^2(1+\alpha)^2 }}=\epsilon c_0 \frac{\sqrt{ \|\widehat{\vct\theta}_0\|^2\alpha^2+\sigma^2(1+\alpha)^2 }}{\|\widehat{\vct\theta}_0\|}+\epsilon^2.
\end{eqnarray*}
Since $\|\vct{\widehat{\theta}}_0(\lambda)\|^2$ asymptotically converges to some fixed value, the solution of $\alpha$ also asymptotically converges.
\end{proof}

\subsection{Cross Validation}\label{sec:proof:cv}
We present the proof of Lemma \ref{thm:alpha} and Theorem \ref{thm:cv} in this section.
\begin{proof}[Proof of Lemma \ref{thm:alpha}]
    To do cross validation, we know that
\begin{eqnarray*}
&&\widehat{{\vct\theta}}(\lambda)-\widehat{{\vct\theta}}^{-j}(\lambda)\\
&=&( \vct{X}_n^{\top}\vct{X}_n +n\lambda \vct{I}_d  )^{-1}\vct{X}_n^{\top} y\\
&&- \left[ ( \vct{X}_n^{\top}\vct{X}_n +n\lambda  \vct{I}_d)^{-1}+\frac{( \vct{X}_n^{\top}\vct{X}_n +n\lambda  \vct{I}_d)^{-1}\x_jx_j^{\top}( \vct{X}_n^{\top}\vct{X}_n +n\lambda  \vct{I}_d)^{-1}}{1-x_{j}^{\top}( \vct{X}_n^{\top}\vct{X}_n +n\lambda  \vct{I}_d)^{-1}\x_j} \right]X_{-j}^{\top} y_{-j}\\
&:=& y_j( \vct{X}_n^{\top}\vct{X}_n +n\lambda  \vct{I}_d)^{-1}\x_j- \frac{\widehat{y}_j(\vct{X}_n^{\top}\vct{X}_n+n\lambda  \vct{I}_d)^{-1}\x_j}{1-S_j(\lambda)}+ \frac{y_jS_j(\lambda)(\vct{X}_n^{\top}\vct{X}_n+n\lambda  \vct{I}_d)^{-1}\x_j}{1-S_j(\lambda)}\\
&=&\frac{y_j-\widehat{y}_j(\lambda)}{1-S_j(\lambda)}(\vct{X}_n^{\top}\vct{X}_n+n\lambda  \vct{I}_d)^{-1}\x_j.
\end{eqnarray*}
In addition, $\alpha$ satisfies
\begin{eqnarray*}
\alpha\left(1+\epsilon  c_0  \frac{\|\widetilde{{\vct\theta}}\|}{\sqrt{\|\widetilde{\vct\theta}-\widehat{\vct\theta}_0\|_{\vct\Sigma}^2+\sigma^2}}\right)= \left(\epsilon  c_0  \frac{\sqrt{\|\widetilde{\vct\theta}-\widehat{\vct\theta}_0\|_{\vct\Sigma}^2+\sigma^2}}{\|\widetilde{\vct\theta}\|}+\epsilon^2\right).
\end{eqnarray*}
For the optimal solution in the adversarial training stage, we have
	\begin{eqnarray*}
		\textbf{0}=\triangledown R_\epsilon({\vct\theta},\widehat{\vct\theta}_0)&=&2 \left[\Sigma ({\vct\theta}-\widehat{\vct\theta}_0)+\epsilon c_0 \frac{\|{\vct\theta}-\widehat{\vct\theta}_0\|_{\vct\Sigma}}{\|{\vct\theta}\|}{\vct\theta}+\epsilon c_0 \frac{\|{\vct\theta}\|}{\|{\vct\theta}-\widehat{\vct\theta}_0\|_{\vct\Sigma}}\vct\Sigma ({\vct\theta}-\widehat{\vct\theta}_0)+(\epsilon^2) {\vct\theta}\right]\\
		&=& 2 \left[\left(  \vct{I}_d+\epsilon c_0 \frac{\|{\vct\theta}\|}{\|{\vct\theta}-{\vct\theta}_0\|_{\vct\Sigma}}\right)\Sigma ({\vct\theta}-\widehat{\vct\theta}_0)+\left(\epsilon c_0 \frac{\|{\vct\theta}-\widehat{\vct\theta}_0\|_{\vct\Sigma}}{\|{\vct\theta}\|}+\epsilon^2\right){\vct\theta}\right].
	\end{eqnarray*}
 For leave-one-out CV, we have
	\begin{eqnarray*}
	\textbf{0}=	\triangledown R_\epsilon({\vct\theta},\widehat{\vct\theta}_0^{-j})&=&2 \left[\Sigma ({\vct\theta}-\widehat{\vct\theta}_0^{-j})+\epsilon c_0 \frac{\|{\vct\theta}-\widehat{\vct\theta}_0^{-j}\|_{\vct\Sigma}}{\|{\vct\theta}\|}{\vct\theta}+\epsilon c_0 \frac{\|{\vct\theta}\|}{\|{\vct\theta}-\widehat{\vct\theta}_0^{-j}\|_{\vct\Sigma}}\vct\Sigma ({\vct\theta}-\widehat{\vct\theta}_0^{-j})+(\epsilon^2) {\vct\theta}\right]\\
		&=& 2 \left[\left( \vct{I}_d+\epsilon c_0 \frac{\|{\vct\theta}\|}{\|{\vct\theta}-{\vct\theta}_0^{-j}\|_{\vct\Sigma}}\right)\Sigma ({\vct\theta}-\widehat{\vct\theta}_0^{-j})+\left(\epsilon c_0 \frac{\|{\vct\theta}-\widehat{\vct\theta}_0^{-j}\|_{\vct\Sigma}}{\|{\vct\theta}\|}+\epsilon^2\right){\vct\theta}\right].
	\end{eqnarray*}
 Consequently,
\begin{eqnarray*}
    &&\left( \vct{I}_d+\epsilon c_0 \frac{\|\widetilde{\vct\theta}\|}{\|\widetilde{\vct\theta}-{\vct\theta}_0\|_{\vct\Sigma}}\right)\Sigma (\widetilde{\vct\theta}-\widehat{\vct\theta}_0)+\left(\epsilon c_0 \frac{\|\widetilde{\vct\theta}-\widehat{\vct\theta}_0\|_{\vct\Sigma}}{\|\widetilde{\vct\theta}\|}+\epsilon^2\right)\widetilde{\vct\theta}\\
    &=&\left(  \vct{I}_d+\epsilon c_0 \frac{\|\widetilde{\vct\theta}^{-j}\|}{\|\widetilde{\vct\theta}^{-j}-{\vct\theta}_0^{-j}\|_{\vct\Sigma}}\right)\Sigma (\widetilde{\vct\theta}^{-j}-\widehat{\vct\theta}_0^{-j})+\left(\epsilon c_0 \frac{\|\widetilde{\vct\theta}^{-j}-\widehat{\vct\theta}_0^{-j}\|_{\vct\Sigma}}{\|\widetilde{\vct\theta}^{-j}\|}+\epsilon^2\right)\widetilde{\vct\theta}^{-j}.
\end{eqnarray*}
Denote
\begin{eqnarray*}
    \Delta_j&=&\widehat{\vct\theta}_0^{-j}-\widehat{\vct\theta}_0,
\end{eqnarray*}
and denote $\alpha^{-j}$ as the best $\alpha$ without $j$th sample. Then
\begin{eqnarray*}
    \widetilde{\vct\theta}^{-j}-\widetilde{{\vct\theta}}&=&(\vct\Sigma+\alpha^{-j}\vct{I}_d)^{-1}\vct\Sigma\widehat{\vct\theta}_0^{-j}-(\vct\Sigma+\alpha \vct{I}_d)^{-1}\vct\Sigma\widehat{\vct\theta}_0\\
    &=&(\vct\Sigma+\alpha \vct{I}_d)^{-1}\vct\Sigma \Delta_j - (\alpha^{-j}-\alpha)(\vct\Sigma+\alpha \vct{I}_d)^{-1}\widetilde{\vct\theta}+R_0.
\end{eqnarray*}
When $\|\vct{\widehat\theta}_0\|$ and $\|\vct{\widetilde\theta}\|$ are away from zero,  $$\|R_0\|=O(|\alpha^{-j}-\alpha|\|\Delta_j\|).$$
As a result,
\begin{eqnarray*}
    &&\left( \vct{I}_d+\epsilon c_0 \frac{\|\widetilde{\vct\theta}\|}{\|\widetilde{\vct\theta}-{\vct\theta}_0\|_{\vct\Sigma}}\right)\Sigma (\widetilde{\vct\theta}-\widehat{\vct\theta}_0)+\left(\epsilon c_0 \frac{\|\widetilde{\vct\theta}-\widehat{\vct\theta}_0\|_{\vct\Sigma}}{\|\widetilde{\vct\theta}\|}+\epsilon^2\right)\widetilde{\vct\theta}\\
    &=&\left(  \vct{I}_d+\epsilon c_0 \frac{\|\widetilde{\vct\theta}^{-j}\|}{\|\widetilde{\vct\theta}^{-j}-{\vct\theta}_0^{-j}\|_{\vct\Sigma}}\right)\Sigma (\widetilde{\vct\theta}^{-j}-\widehat{\vct\theta}_0^{-j})+\left(\epsilon c_0 \frac{\|\widetilde{\vct\theta}^{-j}-\widehat{\vct\theta}_0^{-j}\|_{\vct\Sigma}}{\|\widetilde{\vct\theta}^{-j}\|}+\epsilon^2\right)\widetilde{\vct\theta}^{-j}\\
    &&+\left(  \vct{I}_d+\epsilon c_0 \frac{\|\widetilde{\vct\theta}\|}{\|\widetilde{\vct\theta}-{\vct\theta}_0\|_{\vct\Sigma}}\right)\Sigma (\widetilde{\vct\theta}^{-j}-\widehat{\vct\theta}_0^{-j})+\left(\epsilon c_0 \frac{\|\widetilde{\vct\theta}-\widehat{\vct\theta}_0\|_{\vct\Sigma}}{\|\widetilde{\vct\theta}\|}+\epsilon^2\right)\widetilde{\vct\theta}^{-j}\\
    &&-\left(  \vct{I}_d+\epsilon c_0 \frac{\|\widetilde{\vct\theta}\|}{\|\widetilde{\vct\theta}-{\vct\theta}_0\|_{\vct\Sigma}}\right)\Sigma (\widetilde{\vct\theta}^{-j}-\widehat{\vct\theta}_0^{-j})-\left(\epsilon c_0 \frac{\|\widetilde{\vct\theta}-\widehat{\vct\theta}_0\|_{\vct\Sigma}}{\|\widetilde{\vct\theta}\|}+\epsilon^2\right)\widetilde{\vct\theta}^{-j}\\
    &=&\epsilon c_0 \left(  \frac{\|\widetilde{\vct\theta}^{-j}\|}{\|\widetilde{\vct\theta}^{-j}-{\vct\theta}_0^{-j}\|_{\vct\Sigma}}-\frac{\|\widetilde{\vct\theta}\|}{\|\widetilde{\vct\theta}-{\vct\theta}_0\|_{\vct\Sigma}}\right)\Sigma (\widetilde{\vct\theta}^{-j}-\widehat{\vct\theta}_0^{-j})+\epsilon c_0 \left(\frac{\|\widetilde{\vct\theta}^{-j}-\widehat{\vct\theta}_0^{-j}\|_{\vct\Sigma}}{\|\widetilde{\vct\theta}^{-j}\|}-\frac{\|\widetilde{\vct\theta}-\widehat{\vct\theta}_0\|_{\vct\Sigma}}{\|\widetilde{\vct\theta}\|}\right)\widetilde{\vct\theta}^{-j}\\
    &&+\left(  \vct{I}_d+\epsilon c_0 \frac{\|\widetilde{\vct\theta}\|}{\|\widetilde{\vct\theta}-{\vct\theta}_0\|_{\vct\Sigma}}\right)\Sigma (\widetilde{\vct\theta}^{-j}-\widehat{\vct\theta}_0^{-j})+\left(\epsilon c_0 \frac{\|\widetilde{\vct\theta}-\widehat{\vct\theta}_0\|_{\vct\Sigma}}{\|\widetilde{\vct\theta}\|}+\epsilon^2\right)\widetilde{\vct\theta}^{-j},
\end{eqnarray*}
and changing the order of the terms in the above, we have
\begin{eqnarray*}
   &&\left(  \vct{I}_d+\epsilon c_0 \frac{\|\widetilde{\vct\theta}\|}{\|\widetilde{\vct\theta}-{\vct\theta}_0\|_{\vct\Sigma}}\right)\Sigma (\widetilde{\vct\theta}^{-j}-\widehat{\vct\theta}_0^{-j})+\left(\epsilon c_0 \frac{\|\widetilde{\vct\theta}-\widehat{\vct\theta}_0\|_{\vct\Sigma}}{\|\widetilde{\vct\theta}\|}+\epsilon^2\right)\widetilde{\vct\theta}^{-j}-\triangledown R_\epsilon({\vct{\widetilde\theta}},\widehat{\vct\theta}_0)\\
   &=&
   \left( \vct{I}_d+\epsilon c_0 \frac{\|\widetilde{\vct\theta}\|}{\|\widetilde{\vct\theta}-{\vct\theta}_0\|_{\vct\Sigma}}\right)\Sigma (\widetilde{\vct\theta}^{-j}-\widetilde{\vct\theta}-\Delta_j)+\left(\epsilon c_0 \frac{\|\widetilde{\vct\theta}-\widehat{\vct\theta}_0\|_{\vct\Sigma}}{\|\widetilde{\vct\theta}\|}+\epsilon^2\right)(\widetilde{\vct\theta}^{-j}-\widetilde{\vct\theta})\\ 
   &=&-\epsilon c_0 \left(  \frac{\|\widetilde{\vct\theta}^{-j}\|}{\|\widetilde{\vct\theta}^{-j}-{\vct\theta}_0^{-j}\|_{\vct\Sigma}}-\frac{\|\widetilde{\vct\theta}\|}{\|\widetilde{\vct\theta}-\widehat{\vct\theta}_0\|_{\vct\Sigma}}\right)\Sigma (\widetilde{\vct\theta}-\widehat{\vct\theta}_0)-\epsilon c_0 \left(\frac{\|\widetilde{\vct\theta}^{-j}-\widehat{\vct\theta}_0^{-j}\|_{\vct\Sigma}}{\|\widetilde{\vct\theta}^{-j}\|}-\frac{\|\widetilde{\vct\theta}-\widehat{\vct\theta}_0\|_{\vct\Sigma}}{\|\widetilde{\vct\theta}\|}\right)\widetilde{\vct\theta}+R_1,
\end{eqnarray*}
for $$\|R_1\|=O(\|\vct{\widetilde\theta}^{-j}-\vct{\widehat\theta}_0\|\|\Delta_j\|)=O(\|\Delta_j\|^2+|\alpha^{-j}-\alpha|\|\Delta_j\|).$$
We know that
\begin{eqnarray*}
    &&\frac{\|\widetilde{\vct\theta}^{-j}\|}{\|\widetilde{\vct\theta}^{-j}-\widehat{\vct\theta}_0^{-j}\|_{\vct\Sigma}}-\frac{\|\widetilde{\vct\theta}\|}{\|\widetilde{\vct\theta}-\widehat{\vct\theta}_0\|_{\vct\Sigma}}\\
    &=&\frac{1}{\|\widetilde{\vct\theta}-\widehat{\vct\theta}_0\|_{\vct\Sigma}}\frac{ \widetilde{\vct\theta}^{\top}(\widetilde{\vct\theta}^{-j}-\widetilde{\vct\theta})}{\|\widetilde{\vct\theta}\|}-\frac{(\widetilde{\vct\theta}-\widehat{\vct\theta}_0)^{\top}\vct\Sigma(\widetilde{\vct\theta}^{-j}-\widetilde{\vct\theta} - \Delta_j)}{\|\widetilde{\vct\theta}-\widehat{\vct\theta}_0\|_{\vct\Sigma}^3}\|\widetilde{\vct\theta}\| +O(\|R_0\|),
    \end{eqnarray*}
    and rewriting $\vct{\widetilde\theta}$ and $\vct{\widetilde\theta}^{-j}$ as functions of $\alpha$ and $\alpha^{-j}$, the first-order terms can be represented as
    \begin{eqnarray*}
    && \frac{1}{\|\widetilde{\vct\theta}-\widehat{\vct\theta}_0\|_{\vct\Sigma}}\frac{ \widetilde{\vct\theta}^{\top}(\widetilde{\vct\theta}^{-j}-\widetilde{\vct\theta})}{\|\widetilde{\vct\theta}\|}-\frac{(\widetilde{\vct\theta}-\widehat{\vct\theta}_0)^{\top}\vct\Sigma(\widetilde{\vct\theta}^{-j}-\widetilde{\vct\theta} - \Delta_j)}{\|\widetilde{\vct\theta}-\widehat{\vct\theta}_0\|_{\vct\Sigma}^3}\|\widetilde{\vct\theta}\|\\
    &=&\frac{1}{\|\widetilde{\vct\theta}-\widehat{\vct\theta}_0\|_{\vct\Sigma}}\frac{ \widetilde{\vct\theta}^{\top}\left((\vct\Sigma+\alpha \vct{I}_d)^{-1}\vct\Sigma \Delta_j - (\alpha^{-j}-\alpha)(\vct\Sigma+\alpha \vct{I}_d)^{-2}\vct\Sigma\widehat{\vct\theta}_0\right)}{\|\widetilde{\vct\theta}\|}\\
    &&-\frac{(\widetilde{\vct\theta}-\widehat{\vct\theta}_0)^{\top}\vct\Sigma\left((\vct\Sigma+\alpha \vct{I}_d)^{-1}\vct\Sigma \Delta_j - (\alpha^{-j}-\alpha)(\vct\Sigma+\alpha \vct{I}_d)^{-2}\vct\Sigma\widehat{\vct\theta}_0 - \Delta_j\right)}{\|\widetilde{\vct\theta}-\widehat{\vct\theta}_0\|_{\vct\Sigma}^3}\|\widetilde{\vct\theta}\|+O(\|R_0\|)\\
    &=&\frac{1}{\|\widetilde{\vct\theta}-\widehat{\vct\theta}_0\|_{\vct\Sigma}\|\widetilde{\vct\theta}\|}\left( \widetilde{\vct\theta}^{\top}(\vct\Sigma+\alpha \vct{I}_d)^{-1}\vct\Sigma \Delta_j -(\alpha^{-j}-\alpha)\widetilde{\vct\theta}^{\top}(\vct\Sigma+\alpha \vct{I}_d)^{-2}\vct\Sigma\widehat{\vct\theta}_0\right)+O(\|R_0\|)\\
    &&-\frac{\|\widetilde{\vct\theta}\|}{\|\widetilde{\vct\theta}-\widehat{\vct\theta}_0\|_{\vct\Sigma}^3}\left( (\widetilde{\vct\theta}-\widehat{\vct\theta}_0)^{\top}\vct\Sigma(\vct\Sigma+\alpha \vct{I}_d)^{-1}\vct\Sigma\Delta_j-(\alpha^{-j}-\alpha)(\widetilde{\vct\theta}-\widehat{\vct\theta}_0)^{\top}\vct\Sigma(\vct\Sigma+\alpha \vct{I}_d)^{-2}\vct\Sigma\widehat{\vct\theta}_0-(\widetilde{\vct\theta}-\widehat{\vct\theta}_0)^{\top}\vct\Sigma\Delta_j \right)\\
    &=&\underbrace{\frac{1}{\|\widetilde{\vct\theta}-\widehat{\vct\theta}_0\|_{\vct\Sigma}\|\widetilde{\vct\theta}\|}\alpha\widetilde{\vct\theta}^{\top}(\vct\Sigma+\alpha \vct{I}_d)^{-2}\vct\Sigma\widehat{\vct\theta}_0-\frac{\|\widetilde{\vct\theta}\|}{\|\widetilde{\vct\theta}-\widehat{\vct\theta}_0\|_{\vct\Sigma}^3}\alpha(\widetilde{\vct\theta}-\widehat{\vct\theta}_0)^{\top}\vct\Sigma(\vct\Sigma+\alpha \vct{I}_d)^{-2}\vct\Sigma\widehat{\vct\theta}_0}_{:=A_1\alpha}\\
    &&+\underbrace{\left(\frac{1}{\|\widetilde{\vct\theta}-\widehat{\vct\theta}_0\|_{\vct\Sigma}\|\widetilde{\vct\theta}\|}\widetilde{\vct\theta}^{\top}(\vct\Sigma+\alpha \vct{I}_d)^{-1}\vct\Sigma  +\frac{\|\widetilde{\vct\theta}\|}{\|\widetilde{\vct\theta}-\widehat{\vct\theta}_0\|_{\vct\Sigma}^3}\alpha(\widetilde{\vct\theta}-\widehat{\vct\theta}_0)^{\top}\vct\Sigma(\vct\Sigma+\alpha \vct{I}_d)^{-1}\right)}_{:=A_4}\Delta_j\\
    &&-\alpha^{-j}\underbrace{\left(\frac{1}{\|\widetilde{\vct\theta}-\widehat{\vct\theta}_0\|_{\vct\Sigma}\|\widetilde{\vct\theta}\|}\widetilde{\vct\theta}^{\top}(\vct\Sigma+\alpha \vct{I}_d)^{-2}\vct\Sigma\widehat{\vct\theta}_0 - \frac{\|\widetilde{\vct\theta}\|}{\|\widetilde{\vct\theta}-\widehat{\vct\theta}_0\|_{\vct\Sigma}^3}(\widetilde{\vct\theta}-\widehat{\vct\theta}_0)^{\top}\vct\Sigma(\vct\Sigma+\alpha \vct{I}_d)^{-2}\vct\Sigma\widehat{\vct\theta}_0\right)}_{=A_2}+O(\|R_0\|).
\end{eqnarray*}

Similarly,
\begin{eqnarray*}
    &&\frac{\|\widetilde{\vct\theta}^{-j}-\widehat{\vct\theta}_0^{-j}\|_{\vct\Sigma}}{\|\widetilde{\vct\theta}^{-j}\|}-\frac{\|\widetilde{\vct\theta}-\widehat{\vct\theta}_0\|_{\vct\Sigma}}{\|\widetilde{\vct\theta}\|}\\
    &=&\frac{(\widetilde{\vct\theta}-\widehat{\vct\theta}_0)^{\top}\vct\Sigma(\widetilde{\vct\theta}^{-j}-\widetilde{\vct\theta}-\Delta_j)}{\|\widetilde{\vct\theta}\|\|\widetilde{\vct\theta}-\widehat{\vct\theta}_0\|_{\vct\Sigma}}-\frac{\|\widetilde{\vct\theta}-\widehat{\vct\theta}_0\|_{\vct\Sigma}\widetilde{\vct\theta}^{\top}(\widetilde{\vct\theta}^{-j}-\widetilde{\vct\theta})}{\|\widetilde{\vct\theta}\|^3}\\
    &=&\underbrace{\frac{1}{\| \widetilde{\vct\theta}-\widehat{\vct\theta}_0\|_{\vct\Sigma}\|\widetilde{\vct\theta}\|}\alpha (\widetilde{\vct\theta}-\widehat{\vct\theta}_0)^{\top}\vct\Sigma(\vct\Sigma+\alpha \vct{I}_d)^{-2}\vct\Sigma\widehat{\vct\theta}_0-\frac{\|\widetilde{\vct\theta}-\widehat{\vct\theta}_0\|_{\vct\Sigma}}{\|\widetilde{\vct\theta}\|^3}\alpha\widetilde{\vct\theta}^{\top}(\vct\Sigma+\alpha \vct{I}_d)^{-2}\vct\Sigma\widehat{\vct\theta}_0 }_{:=A_2\alpha}\\
    &&+\underbrace{\left(\frac{1}{\| \widetilde{\vct\theta}-\widehat{\vct\theta}_0\|_{\vct\Sigma}\|\widetilde{\vct\theta}\|} \alpha (\widetilde{\vct\theta}-\widehat{\vct\theta}_0)^{\top}\vct\Sigma(\vct\Sigma+\alpha \vct{I}_d)^{-1}-\frac{\|\widetilde{\vct\theta}-\widehat{\vct\theta}_0\|_{\vct\Sigma}}{\|\widetilde{\vct\theta}\|^3}\widetilde{\vct\theta}^{\top}(\vct\Sigma+\alpha \vct{I}_d)^{-1}\vct\Sigma \right)}_{:=A_5}\Delta_j\\
    &&-\alpha^{-j}\underbrace{\left(\frac{1}{\|\widetilde{\vct\theta}-\widehat{\vct\theta}_0\|_{\vct\Sigma}\|\widetilde{\vct\theta}\|} (\widetilde{\vct\theta}-\widehat{\vct\theta}_0)^{\top}\vct\Sigma(\vct\Sigma+\alpha \vct{I}_d)^{-2}\vct\Sigma\widehat{\vct\theta}_0 - \frac{\|\widetilde{\vct\theta}-\widehat{\vct\theta}_0\|_{\vct\Sigma}}{\|\widetilde{\vct\theta}\|^3}\widetilde{\vct\theta}^{\top}\vct\Sigma(\vct\Sigma+\alpha \vct{I}_d)^{-2}\widehat{\vct\theta}_0\right)}_{=A_2}+O(\|R_0\|).
\end{eqnarray*}
As a result,
\begin{eqnarray*}
   &&\left( \vct{I}_d+\epsilon c_0 \frac{\|\widetilde{\vct\theta}\|}{\|\widetilde{\vct\theta}-{\vct\theta}_0\|_{\vct\Sigma}}\right)\Sigma (\widetilde{\vct\theta}^{-j}-\widetilde{\vct\theta}-\Delta_j)+\left(\epsilon c_0 \frac{\|\widetilde{\vct\theta}-\widehat{\vct\theta}_0\|_{\vct\Sigma}}{\|\widetilde{\vct\theta}\|}+\epsilon^2\right)(\widetilde{\vct\theta}^{-j}-\widetilde{\vct\theta})\\ 
   &=&-\frac{1}{2}\left(\epsilon c_0 \left( A_1+A_2\Delta_j-\alpha^{-j} A_3\right)\Sigma (\widetilde{\vct\theta}-\widehat{\vct\theta}_0)+\epsilon c_0 \left(A_4+A_5\Delta_j-\alpha^{-j} A_6\right)\widetilde{\vct\theta}\right)+O(\|R_0\|+\|R_0\|)\\
   &=&\left(\vct{I}_d+\epsilon c_0 \frac{\|\widetilde{\vct\theta}\|}{\|\widetilde{\vct\theta}-{\vct\theta}_0\|_{\vct\Sigma}}\right)\Sigma\left((\vct\Sigma+\alpha \vct{I}_d)^{-1}\vct\Sigma \Delta_j - (\alpha^{-j}-\alpha)(\vct\Sigma+\alpha \vct{I}_d)^{-2}\vct\Sigma\widehat{\vct\theta}_0-\Delta_j\right)\\
   &&+\left(\epsilon c_0 \frac{\|\widetilde{\vct\theta}-\widehat{\vct\theta}_0\|_{\vct\Sigma}}{\|\widetilde{\vct\theta}\|}+\epsilon^2\right)\left((\vct\Sigma+\alpha \vct{I}_d)^{-1}\vct\Sigma \Delta_j - (\alpha^{-j}-\alpha)(\vct\Sigma+\alpha \vct{I}_d)^{-2}\vct\Sigma\widehat{\vct\theta}_0\right)\\
   &=&
   \underbrace{\left[-\alpha \left(\vct{I}_d+\epsilon c_0 \frac{\|\widetilde{\vct\theta}\|}{\|\widetilde{\vct\theta}-{\vct\theta}_0\|_{\vct\Sigma}}\right)+\left(\epsilon c_0 \frac{\|\widetilde{\vct\theta}-\widehat{\vct\theta}_0\|_{\vct\Sigma}}{\|\widetilde{\vct\theta}\|}+\epsilon^2\right)\right](\vct\Sigma+\alpha \vct{I}_d)^{-1}\vct\Sigma}_{=\textbf{0}}\Delta_j\\
   &&+\alpha\underbrace{ \left[\left(\vct{I}_d+\epsilon c_0 \frac{\|\widetilde{\vct\theta}\|}{\|\widetilde{\vct\theta}-{\vct\theta}_0\|_{\vct\Sigma}}\right)\Sigma+\left(\epsilon c_0 \frac{\|\widetilde{\vct\theta}-\widehat{\vct\theta}_0\|_{\vct\Sigma}}{\|\widetilde{\vct\theta}\|}+\epsilon^2\right)\right](\vct\Sigma+\alpha \vct{I}_d)^{-1}\widetilde{\vct\theta}}_{:=A_3}\\
   &&-\alpha^{-j}\left[\left(\vct{I}_d+\epsilon c_0 \frac{\|\widetilde{\vct\theta}\|}{\|\widetilde{\vct\theta}-{\vct\theta}_0\|_{\vct\Sigma}}\right)\Sigma+\left(\epsilon c_0 \frac{\|\widetilde{\vct\theta}-\widehat{\vct\theta}_0\|_{\vct\Sigma}}{\|\widetilde{\vct\theta}\|}+\epsilon^2\right) \right](\vct\Sigma+\alpha \vct{I}_d)^{-1}\widetilde{\vct\theta},
\end{eqnarray*}
that is,
\begin{eqnarray*}
    &&-\epsilon c_0 A_1\Delta_j\vct\Sigma (\widetilde{\vct\theta}-\widehat{\vct\theta}_0)-\epsilon c_0\alpha A_3\Delta_j\widetilde{\vct\theta} \\
    &=& (\alpha^{-j}-\alpha) \left( \epsilon c_0A_2\Sigma(\widetilde{\vct\theta}-\widehat{\vct\theta}_0)+ \epsilon c_0A_4\widetilde{\vct\theta}- A_5  \right)+O(\|R_0\|+\|R_0\|),
\end{eqnarray*}
and
\begin{eqnarray*}
    \alpha^{-j}-\alpha &\approx& \frac{\left( \epsilon c_0A_2\Sigma(\widetilde{\vct\theta}-\widehat{\vct\theta}_0)+ \epsilon c_0 A_4\widetilde{\vct\theta}+ A_5\right)^{\top}}{\| \epsilon c_0A_2\Sigma(\widetilde{\vct\theta}-\widehat{\vct\theta}_0)+ \epsilon c_0 A_4\widetilde{\vct\theta}+ A_5\|^2}\left(\epsilon c_0 A_1\Delta_j\vct\Sigma (\widetilde{\vct\theta}-\widehat{\vct\theta}_0)+\epsilon c_0A_3\Delta_j\widetilde{\vct\theta} \right).
\end{eqnarray*}
\end{proof}

\begin{proof}[Proof of Theorem \ref{thm:cv}]
    From Lemma \ref{thm:alpha}, we know that when $\|\Delta_j\|=o(1)$, $\alpha^{-j}-\alpha=o(1)$. In this proof, we check whether $\|\Delta_j\|\rightarrow 0$ for all $j=1,\ldots,n_1$.
    
One can use the  Kailath Variant fomular (from 3.1.2 of \cite{petersen2008matrix}) to obtain
\begin{eqnarray*}
&&\vct{\widehat{\theta}}_0(\lambda)-\vct{\widehat{\theta}}_0^{-j}(\lambda)\\
&=&( \X^{\top}\X +n\lambda \vct I_d)^{-1}\X^{\top}\vct y\\
&&- \left[ ( \X^{\top}\X +n\lambda \vct I_d)^{-1}+\frac{( \X^{\top}\X +n\lambda \vct I_d)^{-1}\x_j\x_j^{\top}( \X^{\top}\X +n\lambda \vct I_d)^{-1}}{1-\x_{j}^{\top}( \X^{\top}\X +n\lambda \vct I_d)^{-1}\x_j} \right]\X_{-j}^{\top}\vct y_{-j}\\
&=& y_j( \X^{\top}\X +n\lambda \vct I_d)^{-1}\x_j- \frac{\widehat{y}_j(\X^{\top}\X+n\lambda \vct I_d)^{-1}\x_j}{1-S_j(\lambda)}+ \frac{y_jS_j(\lambda)(\X^{\top}\X+n\lambda \vct I_d)^{-1}\x_j}{1-S_j(\lambda)}\\
&=&\frac{y_j-\widehat{y}_j(\lambda)}{1-S_j(\lambda)}(\X^{\top}\X+n\lambda \vct I_d)^{-1}\x_j,
\end{eqnarray*}
where $\widehat{y}_j(\lambda)=\vct{\widehat{\theta}}_0(\lambda)^{\top}\x_j$.

Based on \cite{hastie2019surprises}, almost surely,
In addition, denote $\vct A_i=n_1(\X_{-i}^{\top}\X_{-i}+\lambda n_1 \vct I_d)^{-1}$, and $\delta_i=\frac{\x_i}{\sqrt{n_1}}$. Then
	\begin{eqnarray*}
	\vct x_i^{\top }(\X^{\top}\X+\lambda n_1 \vct I_d)^{-2}\vct x_i
	&=&\frac{1}{n_1}\delta_i\left(\vct A_i-\frac{\vct A_i\delta_i\delta_i^{\top}\vct A_i}{1+\delta_i^{\top}\vct A_i\delta_i} \right)^2\delta_i\\
	&=&\frac{1}{n_1}\delta_i\left(\vct A_i^2-2\frac{\vct A_i^2\delta_i\delta_i^{\top}\vct A_i}{1+\delta_i^{\top}\vct A_i\delta_i} +\frac{\vct A_i\delta_i\delta_i^{\top}\vct A_i^2\delta_i\delta_i^{\top}\vct A_i}{(1+\delta_i^{\top}\vct A_i\delta_i)^2 } \right)\delta_i\\
	&=&\frac{1}{n_1}\left( \delta_i\vct A_i^2\delta_i-2\frac{\delta_i^{\top}\vct A_i^2\delta_i\delta_i^{\top}\vct A_i\delta_i}{1+\delta_i^{\top}\vct A_i\delta_i}+\frac{ (\delta_i^{\top}\vct A_i\delta_i)^2\delta_i^{\top}\vct A_i^2\delta_i }{(1+\delta_i^{\top}\vct A_i\delta_i)^2} \right)\\
	&=&\frac{1}{n_1}\frac{\delta_i^{\top}\vct A_i^2\delta_i}{(1+\delta_i^{\top}\vct A_i\delta_i)^2}
	\\&\xrightarrow{a.s.}& \frac{1}{n_1}\frac{  \gamma m_{\gamma}'(-\lambda) }{(1+\gamma m_{\gamma}(-\lambda))^2}.
	\end{eqnarray*}
 Finally, for $y_i-\widehat{y}_i$, we have
\begin{eqnarray*}
	y_i-\widehat{y}_i&=& y_i -\vct x_i^{\top }(\X^{\top}\X+\lambda n_1 \vct I_d)^{-1}\X^{\top}(\X \vct\theta_0+\vct\varepsilon)\\
	&=&\varepsilon_i-\vct x_i^{\top }(\X^{\top}\X+\lambda n_1 \vct I_d)^{-1}\X^{\top}\vct\varepsilon +\lambda n_1\vct x_i^{\top }(\X^{\top}\X+\lambda n_1 \vct I_d)^{-1}\vct{\theta}_0.
	\end{eqnarray*}
Using Sherman–Morrison formula, we have
\begin{eqnarray*}
    \vct x_i^{\top }(\X^{\top}\X+\lambda n_1 \vct I_d)^{-1}\vct{\theta}_0 =  \vct x_i^{\top }\left[\frac{1}{n_1} \vct A_i-\frac{\vct A_i\vct x_i\vct x_i^{\top } \vct A_i/n_1^2 }{1+\vct x_i^{\top }\vct A_i\vct x_i/n_1}\right]\vct{\theta}_0,
\end{eqnarray*}
thus
\begin{eqnarray*}
    &&\left(\vct x_i^{\top }(\X^{\top}\X+\lambda n_1 \vct I_d)^{-1}\vct{\theta}_0\right)^2\\
    &=&\vct x_i^{\top }\frac{1}{n_1^2} \vct A_i \vct{\theta}_0\vct{\theta}_0^{\top}\vct A_i\vct x_i - \frac{2}{n_1} \vct x_i^{\top } \vct A_i\vct\theta_0\vct\theta_0 \frac{\vct A_i\vct x_i\vct x_i^{\top } \vct A_i/n_1^2 }{1+\vct x_i^{\top }\vct A_i\vct x_i/n_1}\vct x_i +  \vct x_i^{\top } \frac{\vct A_i\vct x_i\vct x_i^{\top } \vct A_i/n_1^2 }{1+\vct x_i^{\top }\vct A_i\vct x_i/n_1}\vct\theta_0\vct\theta_0 \frac{\vct A_i\vct x_i\vct x_i^{\top } \vct A_i/n_1^2 }{1+\vct x_i^{\top }\vct A_i\vct x_i/n_1}\vct x_i\\
    &=& O_p(tr\left( \vct A_i^2 \vct{\theta}_0\vct{\theta}_0^{\top} \right)/n_1^2)\\
    &=& O_p\left( m_\gamma(-\lambda)/n_1^2\right).
\end{eqnarray*}
In addition,
\begin{eqnarray*}
    \left(\vct x_i^{\top }(\X^{\top}\X+\lambda n_1 \vct I_d)^{-1}\X^{\top}\vct\varepsilon \right)^2\rightarrow \sigma^2 \vct x_i^{\top }(\X^{\top}\X+\lambda n_1 \vct I_d)^{-1}\X^{\top}\X(\X^{\top}\X+\lambda n_1 \vct I_d)^{-1} \vct x_i,
\end{eqnarray*}
which converges to a constant.

Finally, given the distribution of $\varepsilon$, we have with probability tending to 1,
$$\sup_{j}\|\vct{\widehat{\theta}}_0(\lambda)-\vct{\widehat{\theta}}_0^{-j}(\lambda)\|^2=o((\log n_1)/n_1).$$

\end{proof}
\end{document}